\pdfoutput=1 
%% The first command in your LaTeX source must be the \documentclass command.
%%
%% Options:
%% twocolumn : Two column layout.
%% hf: enable header and footer.
\documentclass[
% twocolumn,
% hf,
]{ceurart}

%%
%% One can fix some overfulls
\sloppy

%%
%% Minted listings support 
%% Need pygment <http://pygments.org/> <http://pypi.python.org/pypi/Pygments>
\pdfoutput=1 
\usepackage{listings}

\usepackage{graphicx}
\usepackage{appendix}
\usepackage{caption}
\usepackage{subcaption}
\usepackage{natbib}
\usepackage{amsmath}
\usepackage{amsfonts} % for mathematical fonts
\usepackage{amssymb}  % for mathematical symbols
\usepackage{mathtools}
\usepackage{amsthm}
\usepackage{hyperref}
\theoremstyle{plain}
\newtheorem{theorem}{Theorem}[section]
\newtheorem{proposition}[theorem]{Proposition}

\newtheorem{corollary}[theorem]{Corollary}
\theoremstyle{definition}
\newtheorem{definition}[theorem]{Definition}

\theoremstyle{remark}

%% auto break lines
\lstset{breaklines=true}

%%
%% end of the preamble, start of the body of the document source.
\begin{document}

%%
%% Rights management information.
%% CC-BY is default license.
\copyrightyear{2024}
\copyrightclause{Copyright for this paper by its authors.
  Use permitted under Creative Commons License Attribution 4.0
  International (CC BY 4.0).}

%%
%% This command is for the conference information
\conference{Under Review at LNSAI 2024, First International Workshop on Logical Foundations of Neuro-Symbolic AI, IJCAI 2024}

%%
%% The "title" command
\title{Investigating Symbolic Capabilities of \\ Large Language Models}

%\tnotemark[1]
%\tnotetext[1]{You can use this document as the template for preparing your
%  publication. We recommend using the latest version of the ceurart style.}

%%
%% The "author" command and its associated commands are used to define
%% the authors and their affiliations.
\author[1]{Neisarg Dave}[%
%orcid=0000-0002-0877-7063,
email=nud83@psu.edu,
%url=https://yamadharma.github.io/,
]
\cormark[1]
%\fnmark[1]
\address[1]{College of Information Sciences and Technology, The Pennsylvania State University, USA}
\address[2]{Department of Computer Science and Engineering, The Pennsylvania State University, USA}
\address[3]{Department of Computer Science and Engineering, University of South Florida, USA}

\author[2]{Daniel Kifer}[%
%orcid=0000-0001-7116-9338,
%email=i.tiddi@vu.nl,
%url=https://kmitd.github.io/ilaria/,
]
%\fnmark[1]
%\address[3]{Vrije Universiteit Amsterdam, De Boelelaan 1105, 1081 HV Amsterdam, The Netherlands}

\author[1]{C. Lee Giles}[%
%orcid=0000-0002-9421-8566,
%email=Manfred.Jeusfeld@acm.org,
%url=http://conceptbase.sourceforge.net/mjf/,
]
%\fnmark[1]

\author[3]{Ankur Mali}[%
%orcid=0000-0002-9421-8566,
%email=Manfred.Jeusfeld@acm.org,
%url=http://conceptbase.sourceforge.net/mjf/,
]

%% Footnotes
\cortext[1]{Corresponding author.}
%\fntext[1]{These authors contributed equally.}

%%
%% The abstract is a short summary of the work to be presented in the
%% article.
\begin{abstract}
Prompting techniques have significantly enhanced the capabilities of Large Language Models (LLMs) across various complex tasks, including reasoning, planning, and solving math word problems. However, most research has predominantly focused on language-based reasoning and word problems, often overlooking the potential of LLMs in handling symbol-based calculations and reasoning. This study aims to bridge this gap by rigorously evaluating LLMs on a series of symbolic tasks, such as addition, multiplication, modulus arithmetic, numerical precision, and symbolic counting. Our analysis encompasses eight LLMs, including four enterprise-grade and four open-source models, of which three have been pre-trained on mathematical tasks. The assessment framework is anchored in Chomsky’s Hierarchy, providing a robust measure of the computational abilities of these models. The evaluation employs minimally explained prompts alongside the zero-shot Chain of Thoughts technique, allowing models to navigate the solution process autonomously. The findings reveal a significant decline in LLMs' performance on context-free and context-sensitive symbolic tasks as the complexity, represented by the number of symbols, increases. Notably, even the fine-tuned GPT3.5 exhibits only marginal improvements, mirroring the performance trends observed in other models. Across the board, all models demonstrated a limited generalization ability on these symbol-intensive tasks. This research underscores LLMs' challenges with increasing symbolic complexity and highlights the need for specialized training, memory and architectural adjustments to enhance their proficiency in symbol-based reasoning tasks.
\end{abstract}

%%
%% Keywords. The author(s) should pick words that accurately describe
%% the work being presented. Separate the keywords with commas.

\begin{keywords}
  Large Language Models \sep
  Symbolic Tasks \sep
  Chomsky's Hierarchy 
\end{keywords}

%%
%% This command processes the author and affiliation and title
%% information and builds the first part of the formatted document.
\maketitle
\section{Introduction}

Large language models are driving revolutionary advancements in artificial intelligence. They are transforming how we interact with and interpret data across various modalities. These models have surpassed human performance in generating text, images, and videos. LLM-augmented systems are revolutionizing traditional expert systems such as search, retrieval, theorem proving, symbolic reasoning, programming, and drug discovery. Amid these advancements, exploring the symbolic capabilities of LLMs, particularly in logical reasoning, mathematical computation, and formal language manipulation, remains a crucial area of investigation.

A key aspect of leveraging LLM capabilities is developing effective prompting techniques. Prompts are carefully designed inputs that guide the model's behavior, enabling it to perform specific tasks or exhibit desired behaviors. These techniques include few-shot and zero-shot learning, where models are prompted with examples or instructions to help them generalize to new tasks without extensive retraining. Other techniques include Chain-of-Thought \cite{wei2022chain}, Tree-of-Thought \cite{yao2024tree}, Graph-of-Thought \cite{besta2024graph} etc. While LLMs have gained prominence, other neural network models have also been studied for their symbol manipulation abilities. This research can be broadly categorized into two areas: directly learning to solve logical and mathematical problems, and research conducted from the perspective of formal methods and automata theory. Recurrent Neural Networks (RNNs) are highly efficient at learning regular grammars, with Second Order RNNs particularly adept at encoding stable states. Memory-augmented models, such as stack-RNN, tape-RNN, and Neural Turing Machines (NTMs), have demonstrated their ability to handle more complex languages. Additionally, transformers, with their attention mechanisms, have been explored for their potential in managing formal languages and symbolic computations. However, RNN-based models often fail to generalize on tasks such as math word problems and logical reasoning.

%Recent research has focused on applying LLMs to reasoning and mathematical tasks, which require models to go beyond text generation and engage in logical deduction, problem-solving, and arithmetic computation. Studies have shown that while LLMs can solve certain reasoning and math problems, their performance often relies on memorizing training data rather than genuinely understanding underlying principles. This distinction between memorization and true symbolic reasoning is a critical area of investigation, as it impacts the generalizability and robustness of LLMs in real-world applications. Symbolic tasks, such as formal language recognition, rule-based inference, and algebraic manipulation, pose unique challenges for LLMs. These tasks require models to handle precise, structured data and perform operations governed by strict logical rules.

In this work, we address the symbol manipulation capabilities of Large Language Models (LLMs). We investigate fundamental mathematical operations such as addition, multiplication, and counting. The inherent complexity of these tasks is derived from Chomsky's Hierarchy. Specifically, we present the following research questions:

\noindent
\textbf{RQ1:} Can LLMs apply simple symbolic rules, such as addition and multiplication, stably and repeatedly?

\noindent
\textbf{RQ2:} How precisely can LLMs preserve the construction of a group of symbols and respect the order of operations?

In this study, we first discuss task complexity and the computational requirements necessary to solve these tasks. We then estimate the number of bits required to encode such a machine and compare it to the knowledge tuple encoding in LLMs as described by Deletang et al. \cite{deletang2022neural}. Additionally, we estimate the number of parameters needed for a neural network to solve the given tasks using both methods. Finally, we provide experimental evidence of the performance of LLMs. We evaluate four enterprise LLMs and four open-source LLMs on three essential skills: addition, multiplication, and counting, and design $5$ tasks to answer $RQ1$ and $RQ2$. Utilizing Hindu-Arabic numerals and both uppercase and lowercase English alphabets as symbols, we construct multi-digit numbers from single digits. This approach allows us to assess the precision of the LLMs by increasing the number of digits in the numbers, both before and after the decimal point.
%TODO
% Research Focus
% Structure of Paper

\section{Related Work}
%Symbolic representation in neural networks \cite{symbolic_representation} is traditionally being studied as rule extraction \cite{omlin1996extraction} and insertion \cite{giles1993extraction} as deterministic finite state automaton \cite{omlin1996constructing} encoded by neural networks. Since Recurrent Neural Networks naturally emulate state update step as  they read input sequentially, they became goto models to encapuslate neurosymbolic ideas.

%

%\noindent
%\textbf{Symbol Manipulation in Neural Networks}

Early research demonstrated that recurrent neural networks (RNNs) could learn and encode deterministic finite state automata, enabling them to perform symbolic tasks such as rule-based reasoning and grammatical inference. Omlin and Giles \cite{omlin1996extraction, giles1993extraction, omlin1996constructing} demonstrated the extraction, insertion, and refinement of deterministic finite automata in recurrent neural networks. They also explored higher-order recurrent networks and their applications in grammatical inference \cite{omlin_2nd_order}. However, RNNs have been shown to struggle with tasks extending beyond regular grammars. For instance, RNNs require auxiliary data structures like stacks or tapes to process inputs governed by context-free or context-sensitive grammars. Mali et al. \cite{mali2023computational, mali2021recognizing} and 
Stogin et al. \cite{Stogin_mali} advanced this work by providing theoretical bounds for the expressiveness of RNNs and memory-augmented RNN models. Dave et al. \cite{dave2024stability} empirically showed that second-order connections encode states more stably for formal languages in comparison to first-order RNNs like LSTM and GRU. Symbolic tasks encompass a range of applications, including the manipulation of formal languages and the execution of mathematical operations \cite{mali2021recognizing_a}. Mathematics provides a structured framework for symbolic evaluation, aligning well with the principles of neurosymbolic AI, where symbol manipulation and operations are central. Saxton et al. \cite{saxton2019analysing} created a dataset of math questions across various topics and observed that LSTM models barely memorized the question-answer pairs and could not generalize. Dave et al. \cite{dave2021math} used the same dataset to create distractors for math multiple-choice questions. Mistry et al. \cite{mistry2022primer} summarized the specialized architectures developed to solve arithmetic tasks, highlighting gaps in the robustness, compositionality, and interpretability of these models. The development of large language models (LLMs) has spurred significant interest in their mathematical and symbolic capabilities. Research has increasingly focused on training LLMs specifically for mathematical tasks, aiming to enhance their proficiency in symbolic reasoning. LLM models like Deepseek \cite{bi2024deepseek}, LLemma \cite{azerbayev2023llemma}, and Metamath \cite{yu2023metamath} are specially trained with math datasets. However, studies have shown that neural networks, including LLMs, often fail to genuinely learn to solve mathematical and symbolic tasks. Dziri et al. \cite{dziri2024faith} investigated the limitations of LLMs across three problems: multiplication, puzzle solving, and dynamic programming. They tested GPT-3, GPT-3.5, and GPT-4 models using zero-shot, few-shot, and fine-tuning techniques, revealing a significant decline in performance as the sample size increased. Additionally, Frieder et al. \cite{frieder_chatgpt} demonstrated that GPT-4's mathematical proficiency was far below graduate-level standards. Previous research on the mathematical proficiency of large language models (LLMs) has primarily focused on a broad set of math problems and reported findings based on static datasets such as GSM 8K \cite{cobbe2021training} and MATH \cite{hendrycks2021measuring}. This approach does not uncover the underlying inconsistencies in LLMs' understanding of mathematics. In this work, we examine the consistency and precision of LLMs as the complexity of the tasks increases.

Through our analysis, we assess the performance of enterprise and open-source LLMs trained on text and mathematical datasets, testing them on five symbolic tasks. These tasks fall into context-free and context-sensitive categories on Chomsky's hierarchy. We discuss the bit complexity of encoding symbolic tasks as an automaton and a set of knowledge tuples. Our experiments show that while enterprise LLMs, with their higher parameters and curated datasets, outperform open-source models, all models exhibit similar performance trends that align with the knowledge-tuple encoding of symbolic tasks. Fine-tuning these models on symbolic tasks has little effect on their performance, indicating that LLMs do not learn the rules of symbol manipulation but rather encapsulate relationships in terms of tuples. The contrast in the number of parameters required to encode symbolic tasks suggests a need for developing larger models capable of learning automata rather than merely storing information.

\section{Background}
Ideally, a model should either encode mathematical tasks as knowledge tuples or learn to simulate a machine that can solve such tasks efficiently. In this section, we examine the parameters required by models using both approaches. We relate tasks to their complexity, derived from Chomsky's hierarchy, with detailed definitions provided in the appendix. Furthermore, prior work by Allen-Zhu et al. \cite{AllenZhu2024PhysicsOL} has studied the bit capacity of large language models. Following their setup, we use the conversion of 2 bits per parameter to estimate the number of parameters required.
% Ideally a model should either encode mathematical tasks as knowledge tuples or learn to simulate a machine that can solve such tasks efficiently. In this section we look at the parameters required by model for both methods. To achieve this we relate tasks based on complexity derived from chomsky hierarchy. In appendix we provide details and definition regarding chomsky hierarchy. Furthermore prior work \cite{AllenZhu2024PhysicsOL}
% have studied the bit capacity of large language models. We follow their setup and use the conversion of 2 bits/parameter to estimate the number of parameters.

%=======================================================================================================================

\subsection{Addition of Sequence of Numbers}
\label{sec:addition}
A finite state machine can add two numbers. Addition of sequence of numbers can be represented by production rules $A \rightarrow tB; A \rightarrow t $, where$A$ is the sequence of numbers, $t$ is the sum of partial sequence and $B$ is the remaining sequence. It is clear from the construction that a non-deterministic PDA can solve this problem. At each step of operation, the nPDA computes the sum of two numbers of arbitrary but finite number of digits. Let the the number of digits be $n$ and $m$. Then the number of digits in the sum of two numbers is $max(n, m) + 1$. There are two ways a LLM can encode the the rules of addition : 1) Encoding the states and stack in the internal parameters of the model, 2) Encoding addition rules for each pair of numbers. Encoding the nPDA requires $log_2(\#(STATES) + MAXLEN(stack))$ bits which is considerable lower than encoding specific addition rules for numbers of arbitrary number of digits. 

% \begin{proposition}
% Encoding the addition rule of two numbers of base $p$ and finite number of digits $n$ and $m$ respectively, with $n\geq m$,  require at most $(2n + m + 1) \log_2 p$ bits. 
% \end{proposition}
\begin{proposition}
For the addition of two base $p$ numbers with finite digits $n$ and $m$, respectively, where $n \geq m$, the encoding of their sum requires at most $(2n + m + 1) \log_2 p$ bits. This account includes the possibility of a carryover in the addition, which may increase the length of the resulting number by one digit.
\end{proposition}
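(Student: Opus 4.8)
The plan is to read ``the encoding of their sum'' in the sense made explicit in the paragraph preceding the proposition, namely the cost of storing the addition rule as a knowledge tuple $(a, b, a+b)$ --- the two addends together with their result --- rather than the bare result in isolation. Under this reading the bound is a direct digit count, and the whole argument reduces to three ingredients: the per-digit cost in base $p$, the digit lengths of the two inputs, and a bound on the digit length of the output.

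First I would record the elementary fact that a single base-$p$ digit ranges over $p$ values and hence costs $\log_2 p$ bits, so a base-$p$ numeral with $k$ digits costs $k \log_2 p$ bits. Applying this to the two addends gives $n \log_2 p$ bits for the first (the one with $n \geq m$ digits) and $m \log_2 p$ bits for the second.

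Next I would bound the length of the sum. The largest $n$-digit and $m$-digit base-$p$ numbers are $p^n - 1$ and $p^m - 1$; since $n \geq m$ and $p \geq 2$,
\[
(p^n - 1) + (p^m - 1) \;\le\; 2(p^n - 1) \;<\; 2 p^n \;\le\; p \cdot p^n \;=\; p^{n+1},
\]
so the sum is strictly below $p^{n+1}$ and therefore has at most $n+1$ digits --- this is precisely the single carryover mentioned in the statement. Encoding it thus costs at most $(n+1)\log_2 p$ bits. Adding the three contributions yields $\bigl(n + m + (n+1)\bigr)\log_2 p = (2n + m + 1)\log_2 p$ bits, which is the claimed bound.

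I do not expect a genuine obstacle here: the only point requiring any care is justifying the ``$+1$'' carry term (the displayed inequality above), and, at the level of exposition, making clear that ``encoding the sum'' is meant as encoding the full input--output relation for that pair of numbers, so that the $n\log_2 p$ and $m\log_2 p$ terms for the operands are legitimately included. If instead one wanted to bound only the stored result, the same computation gives the sharper $(n+1)\log_2 p$, and I would note that the stated bound is the conservative ``tuple'' version used later when comparing against automaton encodings.
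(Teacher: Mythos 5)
Your proposal follows essentially the same route as the paper's proof: encode the two addends at $n\log_2 p$ and $m\log_2 p$ bits, bound the sum by $n+1$ digits to account for the carry, and total the three contributions to get $(2n+m+1)\log_2 p$. The only difference is that you explicitly justify the carry bound via $(p^n-1)+(p^m-1) < p^{n+1}$ and flag the interpretive point that the operands are being counted alongside the result, both of which the paper leaves implicit; your argument is correct.
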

\begin{proof}
Consider two numbers $A$ and $B$ represented in base $p$, where $A$ has $n$ digits and $B$ has $m$ digits with $n \geq m$. Each digit in these numbers requires $\log_2 p$ bits to encode due to the need to distinguish between $p$ different values.

\textbf{Step 1: Encoding Individual Numbers.}
\begin{itemize}
    \item The number $A$ requires $n \log_2 p$ bits.
    \item The number $B$ requires $m \log_2 p$ bits.
\end{itemize}

\textbf{Step 2: Maximum Size of Sum.}
When adding $A$ and $B$, the maximum number of digits in the sum, $C = A + B$, could be $n+1$ (considering the possibility of a carryover from the most significant digit).

\textbf{Step 3: Encoding the Sum.}
The sum $C$ hence requires at most $(n+1) \log_2 p$ bits.

\textbf{Step 4: Total Encoding Requirement.}
Adding the encoding requirements together, the total number of bits needed is:
\[
n \log_2 p + m \log_2 p + (n + 1) \log_2 p = (2n + m + 1) \log_2 p.
\]
This calculation confirms that the sum of the encoding requirements for the numbers and their potential maximum sum, considering carryover, matches and proves the proposition.
\end{proof}
% \begin{proof}
% An addition rule can be encoded as a tuple of three numbers $(a, b, c)$, such that $c = a+b$. Since the numbers are base $p$, each digit requires $log_2p$ bits.
% \end{proof}

% \begin{corollary}
%     For a sequence of $N$ numbers of base $10$ and atmost $D$ digits each, LLM must store $\eta_r(3D_{max}-2)\log_210$ bits of information. Where
%     $$D_{max} = \lceil log_{10}((10^D - 1)*N)\rceil$$
%    $$\eta_r = \frac{1}{2}\binom{N(10^{D}-1)}{2} $$
%    %$$\eta_r = N(10^{D}-1) * N(10^{D}-1) -1 $$
% \end{corollary}

% Here $D_{max}$ is the maximum number of digits in a number of given sequence or a partial sum. $\eta_r$ is the total number of addition rules required to solve the sequence. Table \ref{tab:limits} shows the maximum size of sequence that can be solved by a LLMs by memorization of all addition rules.

\begin{corollary}
Consider a sequence of \(N\) numbers, each of base \(10\) and with at most \(D\) digits. To completely memorize all possible addition operations within this sequence, a Large Language Model (LLM) must store at least \(\eta_A (3D_{\text{max}} - 2) \log_2 10\) bits of information, where:

\[
D_{\text{max}} = \left\lfloor 1 + \log_{10}\left((10^D - 1) \cdot N\right) \right\rfloor,
\]

and 

\[
\eta_A = \frac{\lambda (\lambda + 1)}{4}, \quad \quad \lambda = N(10^{D}-1)
\]

representing the total number of distinct pairwise addition operations that could be required to solve the sequence. Here, \(D_{\text{max}}\) represents the maximum number of digits in any number or partial sum within the given sequence, and \(\eta_r\) quantifies the total number of unique addition rules required for comprehensive solution strategies.
\end{corollary}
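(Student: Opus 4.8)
The plan is to read the corollary as a composition of three estimates: (i) the number $\eta_A$ of distinct pairwise additions that a complete lookup table over the sequence must contain; (ii) the number of bits needed to store one such addition fact, obtained directly from the Proposition; and (iii) the product of the two, together with an argument that this product is a genuine lower bound and not merely the size of one particular encoding. I would state up front that the quantifier is ``over all admissible input sequences of $N$ numbers with at most $D$ base-$10$ digits'', since that is what forces the worst-case counts below.

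First I would pin down $D_{\text{max}}$. Each summand is at most $10^{D}-1$, so every partial sum of the $N$ numbers is at most $\lambda = N(10^{D}-1)$; the number of base-$10$ digits of an integer $x\ge 1$ is $\lfloor 1+\log_{10}x\rfloor$, which yields exactly the stated $D_{\text{max}}=\lfloor 1+\log_{10}((10^{D}-1)N)\rfloor$. Hence every operand and every result occurring in any addition step of any admissible sequence lies in $\{1,\dots,\lambda\}$ and has at most $D_{\text{max}}$ digits. Applying the Proposition with $p=10$ to a single step whose two operands are recorded with at most $D_{\text{max}}-1$ digits each (so that the carry-extended sum still fits within $D_{\text{max}}$ digits), the triple $(A,B,A+B)$ occupies at most $\bigl((D_{\text{max}}-1)+(D_{\text{max}}-1)+D_{\text{max}}\bigr)\log_2 10=(3D_{\text{max}}-2)\log_2 10$ bits, which is precisely the per-rule factor in the statement.

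Next I would count the rules. Because across admissible sequences the operands of the intermediate additions range over all of $\{1,\dots,\lambda\}$, the set of distinct pairwise addition instances one must store is, up to the constant factor indicated in the statement, the number of unordered operand pairs with repetition from $\{1,\dots,\lambda\}$, namely $\sum_{k=1}^{\lambda}k=\tfrac{\lambda(\lambda+1)}{2}$; this is the origin of $\eta_A=\tfrac{\lambda(\lambda+1)}{4}$ with $\lambda=N(10^{D}-1)$. Multiplying the rule count by the per-rule bit cost gives $\eta_A(3D_{\text{max}}-2)\log_2 10$. To upgrade this to the claimed lower bound I would argue that a model that ``completely memorizes'' the operations must be able to reproduce each of the $\eta_A$ distinct triples, that these triples carry essentially independent information (distinct operand pairs, and sums not jointly determined by a strictly smaller subset), and hence that no encoding can use fewer than $\eta_A(3D_{\text{max}}-2)\log_2 10$ bits; invoking the Allen-Zhu et al.\ $2$-bits-per-parameter convention \cite{AllenZhu2024PhysicsOL} then converts this bit count into a parameter count if desired.

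The main obstacle is exactly this last step: the Proposition furnishes an \emph{upper} bound on the cost of a single triple, whereas the corollary asserts a \emph{lower} bound on total storage, so the argument must be recast as a counting/incompressibility statement over the whole family of admissible addition tables rather than a sum of per-triple upper bounds. A secondary subtlety is that for any single fixed sequence the intermediate operands need not exhaust $\{1,\dots,\lambda\}$, so one must be careful that $\eta_A$ is justified only under the ``all admissible sequences'' quantifier, which I would emphasize to keep the counting argument from becoming circular.
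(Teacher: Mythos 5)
The paper states this corollary without any proof, so there is no official argument to compare against; your reconstruction is therefore the only candidate on the table, and it is the natural one. You correctly trace the two ingredients: $D_{\text{max}}$ is the digit count of the largest possible partial sum $\lambda = N(10^{D}-1)$ via the digit-count formula from the appendix, and the per-rule factor $(3D_{\text{max}}-2)\log_2 10$ is exactly the Proposition's bound $(2n+m+1)\log_2 p$ instantiated at $p=10$, $n=m=D_{\text{max}}-1$. Your multiplication of the rule count by the per-rule cost is evidently what the authors intend, since the stated bound is literally that product.

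Two gaps remain, and to your credit you name the more serious one yourself. First, the Proposition supplies an \emph{upper} bound on the bits needed to encode one triple $(A,B,A+B)$, so the product $\eta_A(3D_{\text{max}}-2)\log_2 10$ is the size of one particular lookup-table encoding, i.e.\ an upper bound on what memorization costs --- not the lower bound the corollary asserts. Your sketch of an incompressibility argument (the $\eta_A$ triples carry essentially independent information) is the right idea but is not carried out: a genuine lower bound would count the number of distinct admissible addition tables and take a logarithm, and would come out smaller than the stated quantity because the sum $\gamma$ is determined by $(A,B)$ and hence contributes no entropy; only the identity of which operand pairs occur needs to be stored. As written, neither the corollary nor your proposal establishes the ``at least'' direction. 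Second, the factor of two in $\eta_A=\lambda(\lambda+1)/4$ versus the count $\lambda(\lambda+1)/2$ of unordered operand pairs with repetition is waved through (``up to the constant factor indicated in the statement''); since $\lambda(\lambda+1)/4$ is not even an integer in general, this is more plausibly an error in the paper than a count you should try to reproduce, but a complete proof would have to either justify or correct it.
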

Table \ref{tab:limits} displays the maximum size of sequences that can be effectively solved by LLMs through the memorization of all necessary addition rules.

%=======================================================================================================================

\subsection{Multiplication}
\label{sec:multiplication}
Multiplicating two numbers of arbitrary digits is a context-sensitive task. It requires a finite state machine augmented with a finite tape. For the input expression of form $a \times b$ on the input tape, the tape pointer reads the digits of $a$ from right to left. For digit $d$ with place value $v$ in $a$, the pointer moves right to the output part of the tape and increments it $d*v*b$ times. The pointer moves back to the digit, sets it to $\text{<BLANK>}$, and moves to the digit on the left.  

\begin{proposition}
An automaton, which includes a finite state controller and a finite length tape, capable of computing the multiplication of an \( n \)-digit number and an \( m \)-digit number, requires encoding with
\[
\log_2 6 + 2 \cdot O(n+m) \cdot \log_2 p \text{ bits}.
\]
\end{proposition}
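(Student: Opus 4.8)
The plan is to count the information needed to specify the automaton described just above the proposition, namely a finite-state controller together with a bounded-length tape, and to show the total fits in the claimed budget of $\log_2 6 + 2\cdot O(n+m)\cdot\log_2 p$ bits. I would separate the encoding into two independent contributions: the state/transition information of the finite controller, and the tape contents (the working storage). The additive $\log_2 6$ term suggests the controller is taken to have a constant number of control states — on the order of $6$, corresponding to the phases in the informal description: read a digit of $a$, move right to the output region, perform the increment, move back, blank the digit, shift left — so specifying which of these control states the machine is in costs $\log_2 6$ bits, and I would argue the transition function among a constant set of states is itself of constant size and absorbed into this term (or, if one prefers, hidden in the $O(\cdot)$).

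Next I would bound the tape. The input region holds the expression $a \times b$, which is $n + m$ digits over a base-$p$ alphabet, hence $(n+m)\log_2 p$ bits. The output region holds the product $a \cdot b$, which has at most $n + m$ digits in base $p$ (since $p^{n-1}\cdot p^{m-1} \le a\cdot b < p^n \cdot p^m$), costing another $(n+m)\log_2 p$ bits. Summing the input and output regions gives $2(n+m)\log_2 p$ bits of tape, and writing $n+m$ as $O(n+m)$ to also cover the blank symbol, a small constant number of delimiter/marker cells, and the place-value counter the controller maintains during the repeated-increment step, the tape cost is $2\cdot O(n+m)\cdot\log_2 p$ bits. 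Adding the controller term $\log_2 6$ yields the stated total.

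The main obstacle — and the place where the argument is genuinely a sketch rather than a computation — is justifying that the intermediate scratch work stays within $O(n+m)$ tape cells. The informal algorithm increments the output $d\cdot v\cdot b$ times for each digit $d$ of $a$ with place value $v$; a naive unary counter for the number of remaining increments would blow the tape up exponentially, so one must either (i) argue that the counter is itself stored in base $p$ and therefore costs only $O(n+m)$ cells, or (ii) restructure the multiplication as $\log$-many shifted additions, each of which manipulates only $O(n+m)$-digit operands. Either way, the claim rests on the fact that every number appearing on the tape — operands, partial products, counters — is at most polynomially (indeed linearly in digit count) as long as the inputs, so that the constant hidden in $O(n+m)$ is genuinely a constant independent of $n$, $m$, and $p$. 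I would make this explicit by listing the tape regions and bounding each, then conclude that their union is $O(n+m)$ base-$p$ cells, giving the $2\cdot O(n+m)\cdot\log_2 p$ bound and completing the proof.
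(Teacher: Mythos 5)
Your proposal matches the paper's proof in its essential decomposition: $\log_2 6$ bits for a six-state controller (the paper enumerates them as two reading states, one writing state, and three head-movement states) plus $2\cdot O(n+m)$ base-$p$ tape cells at $\log_2 p$ bits each for the two operands and the at-most-$(n+m)$-digit product. Your closing remark about bounding the intermediate counter/scratch space is a point the paper's proof silently glosses over, but it does not change the approach, which is the same.
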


\begin{proof}
To compute the multiplication of two numbers, each represented by \( n \) and \( m \) digits respectively, consider an automaton designed as follows:

\textbf{State Design:}
    The finite state controller is structured with six distinct states to manage the computation:
    \begin{enumerate}
        \item Two states are dedicated to reading the digits of the numbers, specifically one state for each number (\( n_1 \) and \( n_2 \)).
        \item One state is allocated for writing the output of the multiplication.
        \item Three additional states facilitate the movement of the tape head to the appropriate positions corresponding to the digits of the two numbers and the resulting output.
    \end{enumerate}
    
\textbf{Tape Complexity:}
    The tape must accommodate the digits of both numbers and the resultant product. The maximal length of the product of two numbers, each with \( n \) and \( m \) digits, is \( n + m \) digits. Therefore, the tape's length is approximately \( O(2 \cdot (n + m)) \) to account for both input numbers and their maximal output size.
    
\textbf{Bit Encoding:}
    The number of bits required to encode the state machine is determined by the logarithm of the number of states, giving \( \log_2 6 \) for the states. The tape encoding requires \( 2 \cdot O(n+m) \cdot \log_2 p \) bits, where \( \log_2 p \) bits are needed to encode each digit of the input and output numbers, assuming the numbers are in base \( p \).

Combining these components, the total encoding requirement for the automaton is thus:
\[
\log_2 6 + 2 \cdot O(n+m) \cdot \log_2 p \text{ bits}.
\]
This completes the proof.
\end{proof}
% \begin{proposition}
% An automaton, composed of a finite state controller and a finite length tape, capable of computing multiplication of an $n$ digit number and a $m$ digit number can be encoded with $\log_26 + 2*O(n+m)*\log_2p$ bits.
% \end{proposition}

% \begin{proof}
% The tape controller can be constructed with $6$ states:  $2$ states to read the digits of $n_1$ and $n_2$ respectively. One state to write output and $3$ states are used to move the tape pointer to the digits of $a$, $b$, and the output, respectively. The tape thus has the size of $O(2*(n + m))$, where $n$ and $m$ are the numbers of digits in $a$ and $b$ respectively. The factor of $2$ here results from the fact that the output of multiplication has almost $a+b$ digits. Thus, this multiplication machine can be encoded with  $\log_26 + 2*O(n+m)*\log_2p$ bits.
% \end{proof}

% \begin{proposition}
% \label{prop:mult_tuple}
% A multiplication rule for expression of type $ a \times b = \gamma$,  can be stored as a knowledge tuple of form $(a, b, \gamma)$ in at most $2(n+m)\log_2p$ bits
% \end{proposition}

% In order to correctly answer expression of type $a \times b$, an LLM needs to encode all possible tuples $(a, b, \gamma)$

\begin{proposition}
\label{prop:mult_tuple}
Given two integers \(a\) and \(b\), where \(a\) is an \(n\)-digit number and \(b\) is an \(m\)-digit number in base \(p\), a multiplication rule that expresses the relationship \(a \times b = \gamma\) can be encoded as a knowledge tuple \((a, b, \gamma)\). This tuple can be stored using at most \(2(n+m)\log_2 p\) bits.
\end{proposition}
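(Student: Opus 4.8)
The plan is to bound the number of bits needed to store the tuple $(a, b, \gamma)$ by accounting for each component separately, exactly as in the proof of the first proposition in Section~\ref{sec:addition}. First I would recall that a single base-$p$ digit carries $\log_2 p$ bits of information, so an $n$-digit number costs $n \log_2 p$ bits and an $m$-digit number costs $m \log_2 p$ bits. Thus the pair $(a, b)$ alone requires $(n+m)\log_2 p$ bits.

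Next I would bound the size of the product $\gamma = a \times b$. Since $a < p^n$ and $b < p^m$, we have $\gamma < p^{n+m}$, so $\gamma$ has at most $n+m$ digits in base $p$ and therefore needs at most $(n+m)\log_2 p$ bits to encode. Adding the two contributions gives
\[
(n+m)\log_2 p + (n+m)\log_2 p = 2(n+m)\log_2 p
\]
bits for the full tuple, which is the claimed bound.

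The argument is essentially a counting/encoding exercise, so there is no deep obstacle; the only subtlety worth stating explicitly is the digit-length bound on $\gamma$, i.e.\ verifying that multiplying an $n$-digit and an $m$-digit number can never produce more than $n+m$ digits (one checks $p^n \cdot p^m = p^{n+m}$, and the maximal value $(p^n-1)(p^m-1)$ is strictly below $p^{n+m}$). I would also note that this bound is deliberately loose — it does not exploit the fact that $\gamma$ is determined by $a$ and $b$, and hence in principle need not be stored at all — because the point of the proposition is to compare the per-rule memorization cost against the automaton encoding cost established in the preceding proposition, mirroring the treatment of addition. Finally I would close with a remark that summing this per-tuple cost over all $\eta$ required multiplication rules yields the total memorization burden, setting up the analogue of the addition corollary and the entries in Table~\ref{tab:limits}.
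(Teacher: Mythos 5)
Your proposal is correct and follows essentially the same decomposition as the paper's proof: $n\log_2 p$ bits for $a$, $m\log_2 p$ bits for $b$, and at most $(n+m)\log_2 p$ bits for $\gamma$, summing to $2(n+m)\log_2 p$. Your explicit verification that $(p^n-1)(p^m-1) < p^{n+m}$ is a small welcome addition the paper states without justification, but the argument is otherwise identical.
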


\begin{proof}
The multiplication of \(a\) and \(b\) results in \(\gamma\), where \(\gamma\) can have at most \(n + m\) digits (considering the maximum carry-over in base \(p\)):

\begin{itemize}
    \item Each digit of \(a\) and \(b\) can be encoded using \(\log_2 p\) bits because each digit represents a value from 0 to \(p-1\).
    \item Therefore, \(a\) requires \(n \cdot \log_2 p\) bits and \(b\) requires \(m \cdot \log_2 p\) bits.
    \item The product \(\gamma\) can be as large as \(n + m\) digits, thus requiring at most \((n + m) \cdot \log_2 p\) bits.
\end{itemize}

The knowledge tuple \((a, b, \gamma)\) encapsulates the complete multiplication expression and therefore combines the storage requirements of \(a\), \(b\), and \(\gamma\). Thus, the total bit requirement is the sum of the bits needed to store \(a\), \(b\), and \(\gamma\):
\[
2(n+m)\log_2 p \text{ bits},
\]
assuming the maximum possible size for \(\gamma\) and ignoring potential savings from compressing common information between \(a\), \(b\), and \(\gamma\). This completes the proof.
\end{proof}

% \begin{corollary}
% All possible knowledge tuples $(a, b, \gamma)$ in base $10$ can be encoded in an LLM in at most $ 4D_{max}*10^{2D_{max}}*\log_210$ bits, where $D_{max}$ is a maximum number of digits in a number.
% \end{corollary}
\begin{corollary}
Given all possible tuples \((a, b, \gamma)\) where \(a\) and \(b\) are numbers in base \(10\) with a maximum of \(D_{\text{max}}\) digits, these tuples can be encoded in a Large Language Model (LLM) using at most 
\[
4D_{\text{max}} \cdot 10^{2D_{\text{max}}} \cdot \log_2 10 \text{ bits},
\]
where \(D_{\text{max}}\) is the maximum number of digits in any number among \(a\), \(b\), and \(\gamma\).
\end{corollary}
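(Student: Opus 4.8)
The plan is to derive the corollary directly from Proposition~\ref{prop:mult_tuple} by combining a worst-case storage bound for a single multiplication tuple with a crude count of how many such tuples can occur when the operands have bounded length. Structurally this is just a ``(number of rules) $\times$ (cost per rule)'' estimate, so once the two ingredients are pinned down the argument is essentially one line of arithmetic.

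First I would bound the cost of storing one tuple. Specializing Proposition~\ref{prop:mult_tuple} to base $p = 10$ and using that both $a$ and $b$ have at most $D_{\text{max}}$ digits, so that we may take $n = m = D_{\text{max}}$ in its bound, each tuple $(a, b, \gamma)$ is storable in at most $2(D_{\text{max}} + D_{\text{max}})\log_2 10 = 4 D_{\text{max}} \log_2 10$ bits. It is worth emphasizing that this factor already subsumes the contribution of $\gamma = a\times b$, since the $n+m$ term in Proposition~\ref{prop:mult_tuple} was precisely chosen to cover a product of up to $n+m$ digits; hence no separate accounting for $\gamma$ is required here.

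Next I would count the tuples and multiply. A base-$10$ number with at most $D_{\text{max}}$ digits is an integer in $\{0, 1, \dots, 10^{D_{\text{max}}} - 1\}$, so there are at most $10^{D_{\text{max}}}$ admissible values for $a$ and likewise for $b$; since $\gamma$ is determined by the pair $(a,b)$, the number of distinct tuples is at most $10^{D_{\text{max}}} \cdot 10^{D_{\text{max}}} = 10^{2D_{\text{max}}}$. Multiplying this count by the per-tuple cost from the previous step gives a total of at most $10^{2D_{\text{max}}} \cdot 4 D_{\text{max}} \log_2 10 = 4 D_{\text{max}} \cdot 10^{2D_{\text{max}}} \cdot \log_2 10$ bits, which is exactly the claimed bound.

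The only real point needing care — the ``main obstacle'', such as it is — is reconciling the uniform per-tuple bound with the hypothesis that $D_{\text{max}}$ is the maximum digit count among $a$, $b$, \emph{and} $\gamma$, because a genuine product can reach $2D_{\text{max}}$ digits; I would resolve this by reading $D_{\text{max}}$ in the per-tuple estimate as an upper bound on the \emph{operand} lengths and letting the built-in $n+m$ term of Proposition~\ref{prop:mult_tuple} absorb $\gamma$, rather than re-deriving a tighter digit count for $\gamma$. The counting step is also deliberately loose — it over-counts leading-zero representations and the zero operand — but since the corollary asserts only an upper bound, this over-count is harmless and in fact keeps the statement clean.
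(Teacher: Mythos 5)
Your proposal is correct and follows essentially the same route as the paper: a per-tuple cost of $4D_{\text{max}}\log_2 10$ bits (obtained from the preceding proposition with $n=m=D_{\text{max}}$) multiplied by a count of $10^{2D_{\text{max}}}$ tuples. If anything, your version is tidier than the paper's, which at one point counts $10^{D_{\text{max}}}\times 10^{D_{\text{max}}}\times 10^{2D_{\text{max}}}=10^{4D_{\text{max}}}$ tuples by treating $\gamma$ as a free coordinate before silently reverting to $10^{2D_{\text{max}}}$ in the final product; your observation that $\gamma$ is determined by $(a,b)$ removes that inconsistency.
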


\begin{proof}
To encode each of the tuples \((a, b, \gamma)\) where \(a\) and \(b\) are up to \(D_{\text{max}}\) digits:
\begin{enumerate}
    \item The total number of possible values for \(a\) or \(b\) is \(10^{D_{\text{max}}}\), since each digit can range from 0 to 9.
    \item The product \(\gamma = a \times b\) can have at most \(2D_{\text{max}}\) digits (considering the worst-case scenario of multiplication).
    \item Thus, the number of possible tuples \((a, b, \gamma)\) is \(10^{D_{\text{max}}} \times 10^{D_{\text{max}}} \times 10^{2D_{\text{max}}} = 10^{4D_{\text{max}}}\).
    \item Each tuple then requires encoding that can be estimated by calculating the total number of bits to represent each digit of \(a\), \(b\), and \(\gamma\) in binary:
    \[
    4D_{\text{max}} \cdot \log_2 10 \text{ bits per digit} \times 10^{2D_{\text{max}}} \text{ possible tuples}.
    \]
\end{enumerate}
This provides upper bound on the bit requirement to encode tuples in an LLM.
\end{proof}

%=======================================================================================================================

\subsection{Symbolic Counting}
\label{sec:counting}
Counting the frequency of a character $c$ in a given string $s$ is well solved using an automaton equipped with a counter and a register. Let $V$ be the set of all possible characters s.t. $c \in V$ and $s \in V^*$. First, the automaton reads $c$ and stores it in the register. The finite state controller then reads characters $c' \in s$ step by step and increments the counter when $c' = c$. 

\begin{proposition}
    An automaton capable of counting the frequency of a character in a given string can be encoded in at most $log_23 + log_2|V| + \lfloor 1 + \log_{p} N_{max}\rfloor*\log_2p$ bits, where $N_{max}$ is the maximum length of an input string in base $p$
\end{proposition}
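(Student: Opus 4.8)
The plan is to mirror the structure of the two preceding propositions: identify the independent components of the counting automaton — the finite-state controller, the one-symbol register, and the integer counter — bound the bits each one needs in isolation, and add the bounds together.

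First I would pin down the controller. From the construction sketched above, three control states suffice: a state that reads the target character $c$ and loads it into the register, a state that scans $s$ one symbol at a time, comparing each against the register and incrementing the counter on a match, and a final output/accept state. Distinguishing $3$ states costs $\log_2 3$ bits (rounding up, as in the earlier encodings). Second, the register holds exactly one element of $V$, so it has $|V|$ possible contents and requires $\log_2 |V|$ bits. Third, the counter: its value never exceeds the number of occurrences of $c$ in $s$, which is at most $|s| \le N_{max}$; writing a nonnegative integer bounded by $N_{max}$ in base $p$ uses $\lfloor 1 + \log_p N_{max} \rfloor$ digits by the standard digit-count formula, and each base-$p$ digit is encoded in $\log_2 p$ bits, so the counter needs $\lfloor 1 + \log_p N_{max} \rfloor \cdot \log_2 p$ bits. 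Summing the three contributions gives the stated bound.

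The only real point to be careful about is the counter bound: I must argue that $N_{max}$ genuinely dominates every reachable counter value, which follows because a character cannot appear in a string more times than the string's length, and that the three pieces are stored additively with no shared structure, so the sum is an honest upper bound. Ceiling slack in $\log_2 3$ and in the digit-count formula only works in our favour, so no tightening is needed. This is essentially the same accounting used for the PDA and tape-automaton encodings, specialised here to a counter machine with a single register.
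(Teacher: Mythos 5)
Your proposal is correct and follows essentially the same decomposition as the paper's proof: three control states costing $\log_2 3$ bits, a one-symbol register costing $\log_2 |V|$ bits, and a counter bounded by the maximum string length costing $\lfloor 1 + \log_p N_{max}\rfloor \cdot \log_2 p$ bits, summed additively. The only difference is a cosmetic one in how the third state's role is described (the paper uses it as a match-detection state rather than an accept state), which does not affect the bound.
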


\begin{proof}

\textbf{Step 1 : Encoding the states}: The above discussed automaton can function with $3$ states. The automaton starts in state $q0$, reads the character $c$ into the register, and moves to the next state ($q1$). The automaton now reads the string, one character at a time. If the input character matches the character in the register, the automaton moves into state $q2$. Otherwise, it remains in $q1$. The states $\{q0, q1 , q2\}$ can be encoded in $\log_23$ bits. 

\textbf{Step 2 : Encoding the register} : Since the register holds one character $c\in V$, it can be encoded in $log_2|V|$ bits.

\textbf{Step 3: Encoding the counter}: The maximum value the counter would need to hold is the length of the longest string. Thus counter can be encoded in $ \lfloor 1 + \log_{p} N_{max}\rfloor*\log_2p$ bits.
\end{proof}

LLMs can mimic this behavior by encoding knowledge tuples $(c, s, n)$

\begin{proposition}
The knowledge tuple $(c, s, n)$, where $n$ is the frequency of $c$ in $s$ can be encoded in atmost $(N_{max}+ 1)log_2|V| + \lfloor 1 + \log_{p} N_{max}\rfloor*\log_2p$ bits.
\end{proposition}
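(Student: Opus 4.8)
The plan is to split the tuple $(c,s,n)$ into its three constituents, bound the number of bits needed to store each one in isolation, and then add the bounds --- the same additive-encoding strategy used in Proposition~\ref{prop:mult_tuple} and in the preceding counting proposition. So the proof will be a short counting argument with no appeal to the automaton construction beyond the fact that the counter value is at most $N_{max}$.

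First I would handle the pair $(c,s)$. The query symbol $c$ ranges over $V$, so distinguishing it costs $\log_2|V|$ bits. The string $s\in V^*$ has length $|s|\le N_{max}$; writing it out symbol by symbol costs $\log_2|V|$ bits per position, hence at most $N_{max}\log_2|V|$ bits in total. Together these two parts account for $(N_{max}+1)\log_2|V|$ bits, which is the first summand in the claimed bound.

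Next I would bound the count $n$. Since $n$ is the number of occurrences of $c$ in $s$, we have $0\le n\le |s|\le N_{max}$, so $n$ is no larger than the maximum value held by the counter in the previous proposition. A nonnegative integer bounded by $N_{max}$ has at most $\lfloor 1+\log_{p} N_{max}\rfloor$ digits in base $p$, and each base-$p$ digit needs $\log_2 p$ bits, giving $\lfloor 1+\log_{p} N_{max}\rfloor\cdot\log_2 p$ bits for $n$. Adding this to the contribution of $(c,s)$ yields
\[
(N_{max}+1)\log_2|V| + \lfloor 1+\log_{p} N_{max}\rfloor\cdot\log_2 p \text{ bits},
\]
as required.

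There is no genuine obstacle here; the one point worth stating carefully rather than glossing over is why $n$ fits in $\lfloor 1+\log_{p} N_{max}\rfloor$ base-$p$ digits --- this rests only on the inequality $n\le N_{max}$ together with the standard digit-count identity, and it keeps the tuple bound consistent with the automaton's counter bound derived just above. As in the earlier results, I would close by noting that this is an upper bound which deliberately ignores any compression coming from the fact that $n$ is fully determined by $c$ and $s$ (so the family of such tuples is far from incompressible); flagging that caveat is the natural way to end the proof.
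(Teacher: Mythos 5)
Your proof is correct: the per-component additive encoding ($\log_2|V|$ bits for $c$, at most $N_{max}\log_2|V|$ bits for $s$, and $\lfloor 1+\log_p N_{max}\rfloor\cdot\log_2 p$ bits for $n\le N_{max}$) is exactly the style of argument the paper uses for its analogous multiplication-tuple and counting-automaton propositions, and indeed the paper states this particular proposition without any proof at all, so your write-up supplies precisely the missing step. Nothing further is needed.
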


\begin{corollary}
An LLM can encode the entire sample space of $(c, s, n)$ knowledge tuples of maximum string length $N_{max}$(in base $10$) in atmost  $|V|^{N_{max}+ 1} \Delta$, where 

$$ \Delta = (N_{max}+ 1)log_2|V| + \lfloor 1 + \log_{10} N_{max}\rfloor*\log_210$$

\end{corollary}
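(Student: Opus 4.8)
The plan is to obtain this corollary immediately from the preceding proposition together with a count of the sample space. First I would instantiate that proposition at base $p = 10$, which asserts that a single knowledge tuple $(c, s, n)$ whose string $s$ has length $N_{max}$ can be stored in at most
\[
\Delta \;=\; (N_{max}+1)\log_2|V| \;+\; \lfloor 1 + \log_{10} N_{max}\rfloor \log_2 10
\]
bits: the first term pays for the $N_{max}$ symbols of $s$ together with the query symbol $c$, and the second pays for the counter value $n$, which never exceeds $N_{max}$.

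Next I would enumerate the tuples in the entire sample space of maximum string length $N_{max}$. Letting $s$ range over $V^{N_{max}}$ and $c$ over $V$ yields $|V|\cdot|V|^{N_{max}} = |V|^{N_{max}+1}$ distinct pairs $(c,s)$; since the third coordinate $n$ is exactly the frequency of $c$ in $s$, it is a deterministic function of the pair and so contributes no additional multiplicative factor. Hence the sample space contains at most $|V|^{N_{max}+1}$ tuples.

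Finally I would multiply the two estimates: concatenating the individual tuple encodings, so that code length is additive (the same accounting already used in the addition and multiplication corollaries), the whole collection occupies at most $|V|^{N_{max}+1}\,\Delta$ bits, which is the stated bound.

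There is no genuine obstacle here — the argument is essentially a one-line counting estimate — but the single point to keep straight is the choice of sample space. The preceding proposition already budgets $(N_{max}+1)\log_2|V|$ bits per tuple by treating $s$ as having the maximal length $N_{max}$, so to stay consistent I would take the space to be $V^{N_{max}}\times V$ rather than the set of all strings of length at most $N_{max}$; had I summed over all shorter strings instead, the count would be $\sum_{k=0}^{N_{max}}|V|^{k+1}\le \tfrac{|V|}{|V|-1}\,|V|^{N_{max}+1}\le 2\,|V|^{N_{max}+1}$ for $|V|\ge 2$, so at most a constant factor is lost either way. As with the earlier corollaries, I would also remark that this upper bound is loose, since it stores $s$ afresh inside every tuple that uses it and thereby ignores the obvious sharing of substrings across tuples.
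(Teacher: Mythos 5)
Your argument is correct and is essentially the one the paper intends: the corollary is stated without an explicit proof, but, exactly as in the multiplication corollary, it is the product of the per-tuple bound from the preceding proposition (instantiated at $p=10$) with the count $|V|^{N_{max}+1}$ of pairs $(c,s)$, the third coordinate $n$ being determined by the pair. Your side remark about whether the sample space is $V^{N_{max}}\times V$ or all shorter strings as well is a fair observation, and your choice is the one consistent with the stated count.
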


Encoding the entire space of possible string combinations is a gargantuan task. It's very inefficient, even for massive LLMs. Especially when counting can be reduced to the addition of a sequence of ones and zeros, where $1$ represents a character match and $0$ represents no match. This reduces symbolic counting into a two-step process:

\textbf{Step 1}: Using attention mechanism to reduce the character matching problem to a binary sequence that represents a match. This can be encoded in atmost $N_{max} + (N_{max}+ 1)log_2|V|$ bits. 

\textbf{Step 2} : Perform sum of sequence on the sequence of $1$s and $0$s. There will be total of $2N_{max}$ knowledge tuples to encode addition in form $(n, 1, n+1)$ and $(n, 0, n)$. These can be encoded in almost $2N_{max}*(1 + 2 \lfloor 1 + \log_{10} N_{max}\rfloor \log_210 )$ bits.

Thus, the bits required by LLMs to encode symbolic counting are: 
$$N_{max} + (N_{max}+ 1)log_2|V| + 2N_{max}*(1 + 2 \lfloor 1 + \log_{10} N_{max}\rfloor \log_210 )$$

\subsection{Parameter Estimation}
Based on the estimates discussed above, we now find out the computational capacity of large language models with respect to the tasks of addition, multiplication, and counting. Table \ref{tab:limits} reflects the bound on input size for given tasks for a model of 7B and 180B parameters, respectively. These bounds are much closer to the performance that we observe in our experiments. Most LLMs are trained on multiple tasks and are primarily focused on modeling language rather than symbolic tasks. Hence, our results show that LLM performance degrades for sample sizes that are way smaller than the bounds discussed.

\begin{table}[h]
\centering
\begin{tabular}{llll}
\hline
Task              & Size Parameter                    & 7B  & 180B  \\ \hline
Sum of Sequence   & Sequence Length                   & 363 & 1660  \\
Multiplication    & $\#digits(n_1) + \#digits(n_2)$   & 8   & 9     \\
Symbolic Counting & String Length                     & 1e8 & 2.5e9 \\ \hline
\end{tabular}
\caption{Maximum input size of the sample for the three tasks for 7B and 180B parameter models. The tasks are encoded as knowledge tuples.}
\label{tab:limits}
\end{table}
Table \ref{tab:pbound} shows the number of parameters required for the two encoding systems for a fixed input size on the given tasks. It is observed that, if neural network models could simulate the machines to solve the task, they would require considerably less number of parameters. Current LLMs have parameters more aligned with the knowledge tuple encoding system.

\begin{table}[h]
\begin{tabular}{lcccc}
\hline
\multicolumn{1}{c}{Task} & \begin{tabular}[c]{@{}c@{}}Size Parameter\\  Name\end{tabular} & \begin{tabular}[c]{@{}c@{}}Parameter \\ Value\end{tabular} & \begin{tabular}[c]{@{}c@{}}Machine \\ Encoding\end{tabular} & \begin{tabular}[c]{@{}c@{}}Knowledge Tuple\\  Encoding\end{tabular} \\ \hline
Sum of Sequence          & Sequence Length                                                & 100                                                        & 8                                                           & 407M                                                                \\
Multiplication           & $\#digits(n_1) + \#digits(n_2)$                              & 8                                                          & 28                                                          & 2B                                                                  \\
Symbolic Counting        & String Length                                                  & 1000                                                       & 11                                                          & 30,929                                                              \\ \hline
\end{tabular}
\caption{Comparison of the number of parameters required in the model for machine encoding vs knowledge tuple encoding.}
\label{tab:pbound}
\end{table}
\section{Experimental Setup}
Fundamental mathematics operations, like addition and multiplication, are well understood as symbolic operations. Numbers can be represented as composite symbols composed of a pre-defined set of symbols, i.e., $0-9$ numerals. We analyze the performance of large language models on varying difficulty of the tasks modeled as the input size. All models are tested on $100$ samples for each difficulty level. We keep a consistent seed for data generation to ensure a fair comparison. 

\subsection{LLM Models}
%We run our experiments on four enterprise LLM models and four open-source models. The enterprise models are namely:
%    \begin{enumerate}
%        \item GPT3.5 :  gpt-3.5-turbo-0125
%        \item GPT4 : gpt-4-0125-preview
%        \item Gemini : gemini-pro-1.0
%        \item Claude : claude-3-haiku-20240307
%    \end{enumerate}
%
%We analyse the following open-source models:
%\begin{enumerate}
%    \item Llamma 2 : 13B
%    \item Llemma : 7B
%    \item Deepseek : 7B
%    \item MetaMath : 7B
%\end{enumerate}

We conduct our experiments on eight LLM models, including four enterprise models and four open-source models. The enterprise models are GPT-3.5 (gpt-3.5-turbo-0125), GPT-4 (gpt-4-0125-preview), Gemini (gemini-pro-1.0), and Claude (claude-3-haiku-20240307). The open-source models analyzed are Llamma 2 (13B), Llemma (7B), Deepseek (7B), and MetaMath (7B). All open source models except Llamma-2 13B are pre-trained on math tasks. Llemma 7B is Llamma-7B based model fine tuned on math datasets. MetaMath is fine-tuned on Llemma-7B. Deepseek is trained on math datasets like OpenWebMath. 

\textbf{Model query parameters } 
We keep the maximum token size of 4096 for all models. All models are queried with temperature of $0$ to maximize determinism in the generated text. Models are not given access to any external API's, python interpretter,  documents or databases, to ensure analysis of standalone LLM performance.  No examples are provided in the prompt, and single prompt query is used for each input sample. Zero-shot COT method is used to allow models to navigate the problems without any help.

\textbf{Prompting}
All prompts are composed of two components: \textit{system prompt} and \textit{user prompt}. Gemini API does not have separate fields for system and user prompts, hence we concatenate both. System prompt is composed of the following components:
\begin{enumerate}
    \item \textbf{Identity} - e.g. for sum of sequence, the prompt stats with \textit{"You are number crunching machine..."}
    \item \textbf{Input type} - the input type can be \textit{sequence of numbers}, \textit{arithmetic expression} etc.
    \item \textbf{Command} - instruction on what to do with the input. e.g. \textit{calculate the sum}
    \item \textbf{Constraint} [Optional] - Any extra guideline or constraint for the model, e.g. \textit{Treat the upper and lower case characters as separate}
    \item \textbf{Zero-Shot COT} - To help models present the intermediate thoughts and reasoning without providing any example, we use Zero-shot-COT method. This is done by appending \textit{Let's think step by step}, at the end of the prompt.
\end{enumerate}

The user prompt has two parts : 1) \textit {input identifier} and 2) \textit{input sequence}

\subsection{Symbolic Tasks}

\begin{table}[h]
\centering
\begin{tabular}{llc}
\hline
\multicolumn{1}{c}{Task} & \multicolumn{1}{c}{Categorization by Chomsky’s Hierarchy}                                                                                                                         & Machine                                                              \\ \hline
Sum of Sequence          & \begin{tabular}[c]{@{}l@{}}Sum of sequence od multi-digit numbers \\ can be constructed as:\\ $A \rightarrow tB ; A\rightarrow t$ \\ Hence it is a context free task\end{tabular} & Pushdown Automata                                                    \\ \hline
Multiplication           & \begin{tabular}[c]{@{}l@{}}Multiplication of two multi-digit numbers \\ is a context-sensitive task as \\ it requires a bounded tape to \\ perform the operation.\end{tabular}    & \begin{tabular}[c]{@{}c@{}}Linearly Bounded \\ Automata\end{tabular} \\ \hline
Symbolic Counting        & \begin{tabular}[c]{@{}l@{}}Counting the matches of a character \\ in a string requires a counter with a FSM,\end{tabular}                                                         & Counter Automata                                                     \\ \hline
Modular-$10$ Arithmetic  & \begin{tabular}[c]{@{}l@{}}A modular arithmetic expression \\ with brackets require a stack \\ to store intermediate results and \\ is thus context-free task\end{tabular}        & Pushdown Automata                                                    \\ \hline
Decimal Arithmetic       & \begin{tabular}[c]{@{}l@{}}Decimal Arithmetic requires a tape to \\ perform multiplication and thus is \\ context-sensitive.\end{tabular}                                         & \begin{tabular}[c]{@{}c@{}}Linearly Bounded \\ Automata\end{tabular} \\ \hline
\end{tabular}
\caption{Categorization of Symbolic Tasks in Chomsky's Hierarchy}
\label{tab:chomsky}
\end{table}

Table \ref{tab:chomsky} shows the anchoring of symbolic tasks in Chomsky's Hierarchy. Sections \ref{sec:addition}, \ref{sec:multiplication} and \ref{sec:counting} provide the machine encoding for addition, multiplication and counting tasks, respectively. Modulo-$10$ arithmetic simplifies addition and multiplication to single digit numbers, but add the complexity of brackets placing them in context-free category. 
Decimal Arithmetic provides a perspective on the precision handling capability of LLMs, sourcing its theoretical framework from addition and multiplication tasks. We define various tasks based on complexity as follows:
\begin{enumerate}
 \item \textbf{Sum of Sequence}
The model is tasked with computing the sum of the sequence of numbers. The numbers in the sequence are sampled uniformly in the range $10$—$99$. Task difficulty is modeled as the length of input sequences. We test the sequence of sizes $5, 10, 20, 50, 100, 200,$ and $500$. 

 \item \textbf{Modulo-10 Arithmetic}
While Deletang et al. \cite{deletang2022neural} used modulo-$5$ arithmetic to benchmark RNN and Transformer models. We follow a similar approach and analyze LLMs' performance on modulo-$10$ arithmetic expressions. The expressions are composed of the following operations: addition, subtraction, multiplication, and brackets. The difficulty of the task is modeled as the length of expression. We test LLMs with expressions of length $5, 10, 20, 50, 100, 200, 500, 1000,$ and $2000$. 

 \item \textbf{Decimal Arithmetic}
We test the precision handling capacity of LLMs by introducing decimal numbers in arithmetic expressions. The difficulty in task is modeled as the number of digits in the fractional part of decimal, while keeping one digit in whole number part. We test the expressions with $1, 2, 3,4,5,$ and $6$ fractional digits,

 \item \textbf{Multiplication}
This tests LLM models on the multiplication of $2$ whole numbers. The difficulty in task is modeled as the sum of number of digits in multiplier and the multiplicand. Multiplication with combined digits of $2, 4, 6, 8, 10,$ and $12$ are tested.

 \item \textbf{Symbolic Counter}
The LLMs are tested on their ability to count the number of character matches in a string. The difficulty of task is modeled as the length of string.  We test LLMs on string lengths on $5, 10, 20, 50, 100, 200, 500, 1000,$ and $2000$
\end{enumerate}

\subsection{Fine Tuning}
We fine-tune GPT3.5 model on sum of sequence and symbolic counting tasks. For sum of sequence we fine-tune specifically for sequence lengths $5, 10, 20, 50$ and  $100$ with $300$ samples for each seq length. The validation set is composed of 50 samples for each length. Finally the model is tested on all sequence lengths i.e,  $5, 10, 20, 50, 100, 200,$ and $500$ with $100$ samples for each length. Similarly, for the symbolic counting tasks, the model is fine-tuned with strings of length $50$, $100$, $200$, and  $500$, with $300$ for each length. The model is tested for all lengths with $100$ samples each.
\section{Results and Discussion}
Figures \ref{fig:sos} - \ref{fig:ft} show the performance of LLMs on the five symbolic tasks. In all cases, we can see that performance starts degrading as the difficulty of input samples increases. By comparing these results with tables \ref{tab:limits} and \ref{tab:pbound} we can see that performance starts degrading for far simpler examples than anticipated. This indicates that LLMs have not learned to solve symbolic problems but rather have a notion of input-output pairings. From the figures, we can see that while GPT3.5 and Claude Haiku models perform very similarly in all tasks, Gemini is able to sustain performance longer than the other models on the sum of sequence task while it is usually behind other models in the rest of the tasks. GPT4 outperforms other models in modulo-$10$ arithmetic, decimal arithmetic, and multiplication tasks while lagging behind in other tasks. Claude Haiku performs fairly consistently in all tasks, being the smallest of all enterprise models. Out of all models, Lemma 7B is the worst-performing model across all tasks. Deepseek and MetaMath perform slightly better due to their math pre-training.

%%------------------------------------------------------------------------------------------
%% Sum of Sequence
%%------------------------------------------------------------------------------------------
\begin{figure}
     \centering
     \begin{subfigure}[b]{0.4\textwidth}
         \centering
         \includegraphics[width=\textwidth]{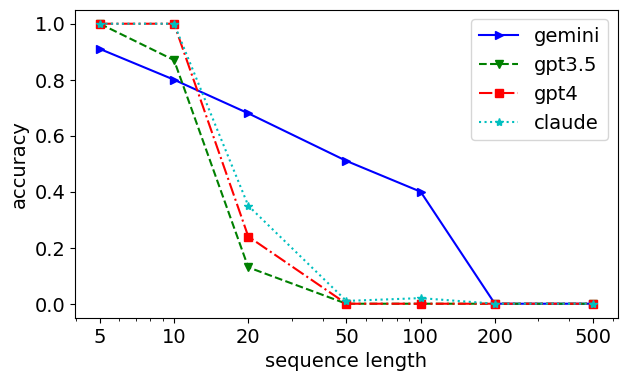}
         \caption{Enterprise LLMs}
         \label{fig:ent_sos}
     \end{subfigure}
     \begin{subfigure}[b]{0.4\textwidth}
         \centering
         \includegraphics[width=\textwidth]{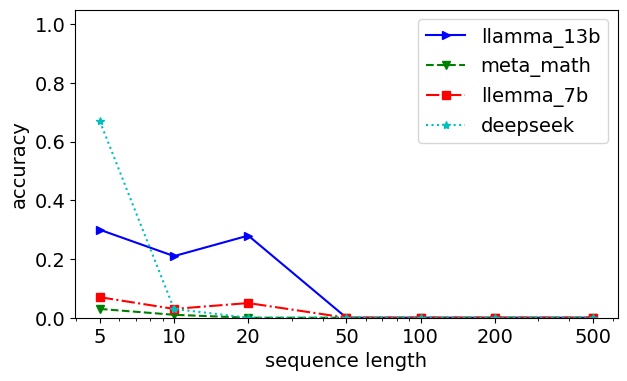}
         \caption{Open Source LLMs}
         \label{fig:os_sos}
     \end{subfigure}
 
     \caption{Performance of LLMs on Sum of Sequence task}
     
     \label{fig:sos}
\end{figure}

%%------------------------------------------------------------------------------------------
%% Modulo 10 Arithmetic
%%------------------------------------------------------------------------------------------

\begin{figure}
     \centering
     \begin{subfigure}[b]{0.4\textwidth}
         \centering
         \includegraphics[width=\textwidth]{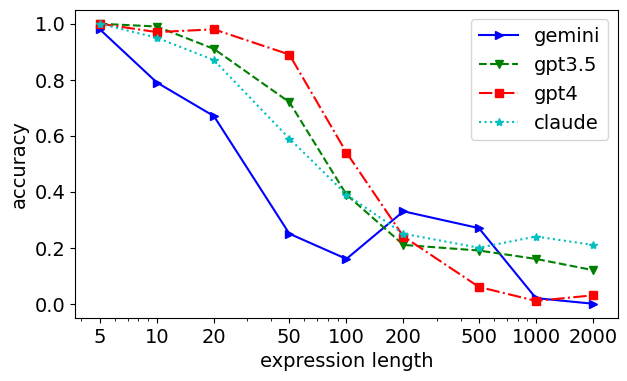}
         \caption{Enterprise LLMs}
         \label{fig:ent_arithmetic}
     \end{subfigure}
     \begin{subfigure}[b]{0.4\textwidth}
         \centering
         \includegraphics[width=\textwidth]{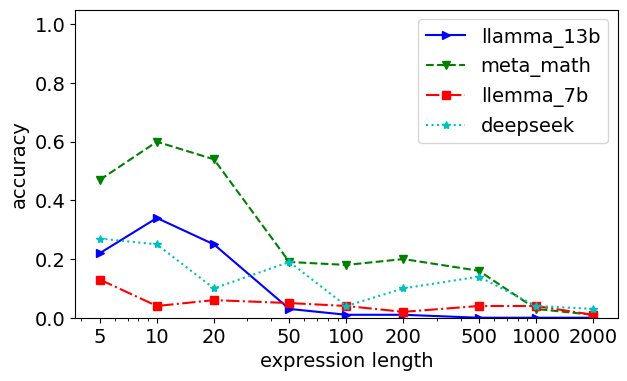}
         \caption{Open Source LLMs}
         \label{fig:os_arithmetic}
     \end{subfigure}
     \caption{Performance of LLMs on Modulo 10 Arithmetic task }
     \label{fig:arithmetic}
\end{figure}

%%------------------------------------------------------------------------------------------
%% Decimal Arithmetic
%%------------------------------------------------------------------------------------------

\begin{figure}
     \centering
     \begin{subfigure}[b]{0.4\textwidth}
         \centering
         \includegraphics[width=\textwidth]{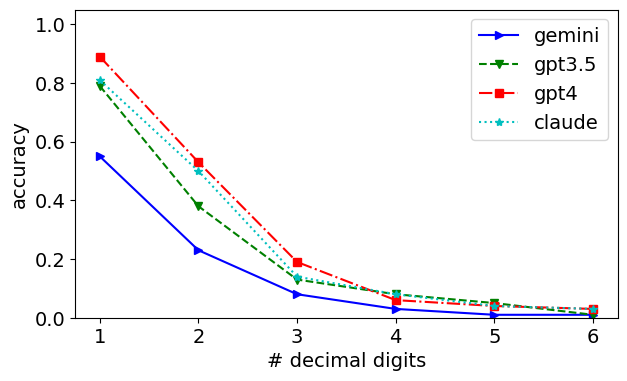}
         \caption{Enterprise LLMs}
         \label{fig:ent_da}
     \end{subfigure}
     \begin{subfigure}[b]{0.4\textwidth}
         \centering
         \includegraphics[width=\textwidth]{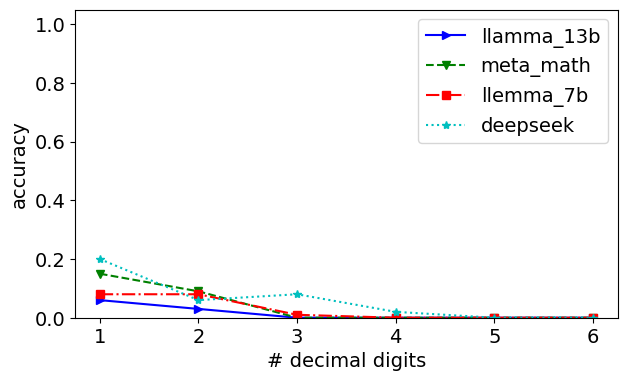}
         \caption{Open Source LLMs}
         \label{fig:os_da}
     \end{subfigure}
     \caption{Performance of LLMs on Decimal Arithmetic task }
     \label{fig:da}
\end{figure}

%%------------------------------------------------------------------------------------------
%% N-Digit Multiplication
%%------------------------------------------------------------------------------------------

\begin{figure}
     \centering
     \begin{subfigure}[b]{0.4\textwidth}
         \centering
         \includegraphics[width=\textwidth]{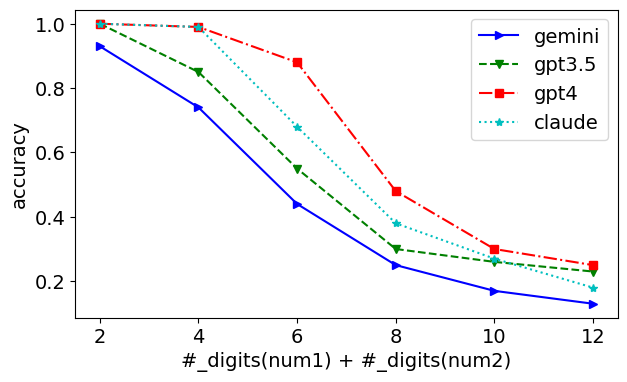}
         \caption{Enterprise LLMs}
         \label{fig:ent_mult}
     \end{subfigure}
     \begin{subfigure}[b]{0.4\textwidth}
         \centering
         \includegraphics[width=\textwidth]{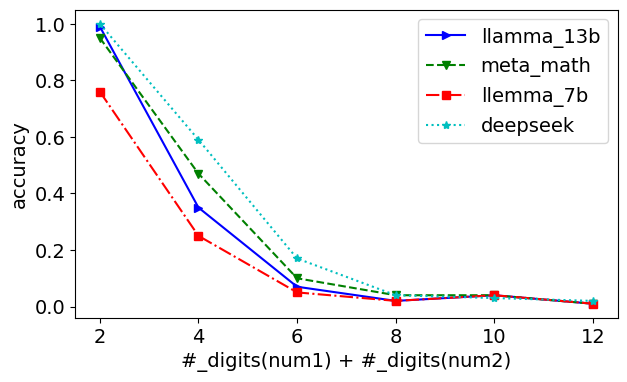}
         \caption{Open Source LLMs}
         \label{fig:os_mult}
     \end{subfigure}
     \label{fig:mult}
     \caption{Performance of LLMs on Multiplication task }
\end{figure}

%%------------------------------------------------------------------------------------------
%% Symbolic Counting
%%------------------------------------------------------------------------------------------

\begin{figure}
     \centering
     \begin{subfigure}[b]{0.4\textwidth}
         \centering
         \includegraphics[width=\textwidth]{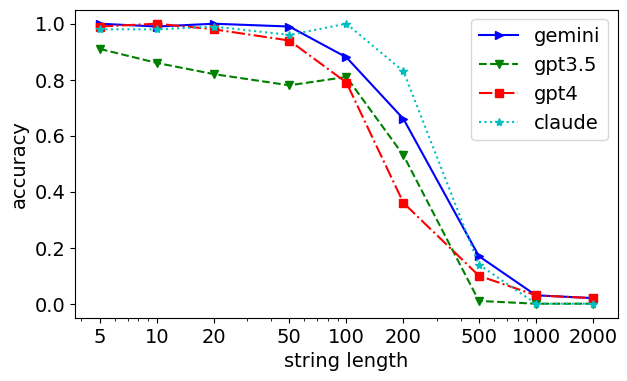}
         \caption{Enterprise LLMs}
         \label{fig:ent_scount}
     \end{subfigure}
     \begin{subfigure}[b]{0.4\textwidth}
         \centering
         \includegraphics[width=\textwidth]{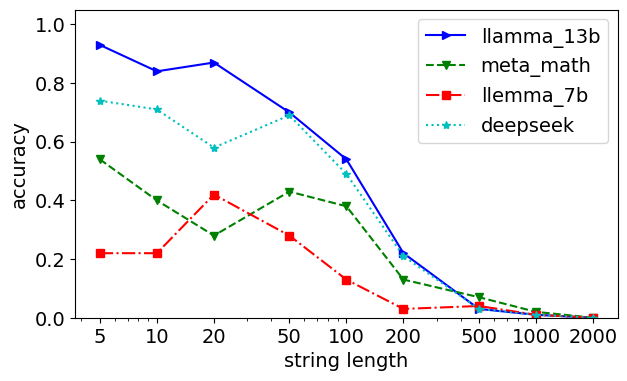}
         \caption{Open Source LLMs}
         \label{fig:os_scount}
     \end{subfigure}
     \caption{Performance of LLMs on Symbolic Counter task }
     \label{fig:scount}
\end{figure}

%%------------------------------------------------------------------------------------------
%% Fine Tuning
%%------------------------------------------------------------------------------------------
\begin{figure}
     \centering
     \begin{subfigure}[b]{0.4\textwidth}
         \centering
         \includegraphics[width=\textwidth]{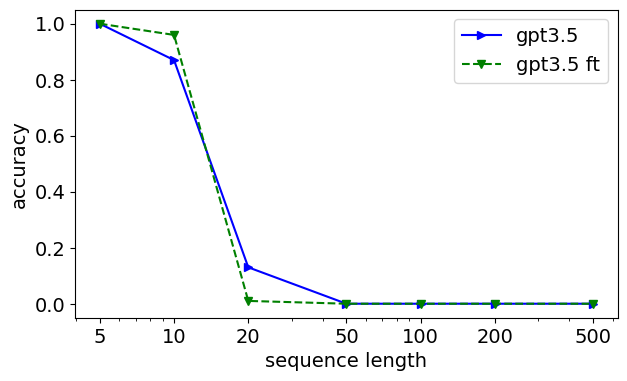}
         \caption{GPT 3.5 fine-tuned on Sum of Sequence Task}
         \label{fig:ft_sos}
     \end{subfigure}
     \begin{subfigure}[b]{0.4\textwidth}
         \centering
         \includegraphics[width=\textwidth]{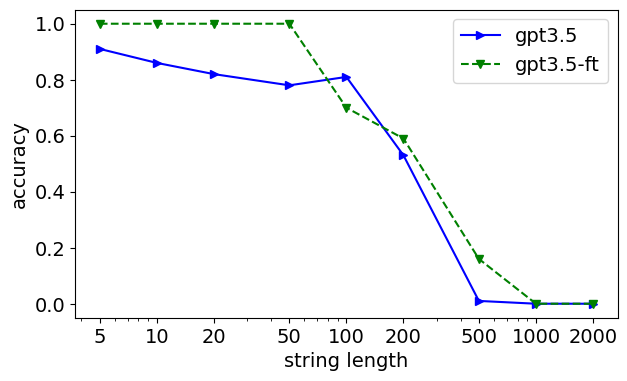}
         \caption{GPT 3.5 Fine-Tuned on Symbolic Counter task}
         \label{fig:ft_scount}
     \end{subfigure}
     
     \caption{Effect of fine-tuning on GPT 3.5 performance on symbolic tasks}
     \label{fig:ft}
\end{figure}

\textbf{Enterprise and Open Source Models}
Multiplication is the only task where open-source models and enterprise models start with similar performances. In symbolic counting, the Llamma 13B model performs similarly to enterprise models for strings of length $5$. In all cases, the performance of open-source models degrades much faster than that of enterprise models. In the other three tasks, open-source models perform far below enterprise models, even for simpler cases. An argument could be made that open-source models are smaller than the enterprise models tested in this work. This further indicates that performance is highly dependent on the models' size, making the knowledge-tuple encoding hypothesis much stronger. 

\textbf{Math trained models}
Out of the three math-trained models, Deepseek gives the best overall performance. However, even math-trained models are not able to sustain their performance on larger examples. Symbolic counting is the only task where the Llamma 13B model outperforms math-trained models. All math-trained models have the number of parameters of Llamma 13B. 

\textbf{Effect of fine tuning}
Figure \ref{fig:ft} shows the comparison of vanilla and fine-tuned GPT3.5 models on the sum of sequence and symbolic counting tasks. We can observe that fine-tuning has helped performance in smaller examples but made larger examples a little worse. Overall, all fine-tuning and closing performances follow vanilla performance, and there is no improvement in the generalization of the tasks.

\section{Conclusion}
In this work, we analyze the performance of enterprise and open-source LLMs trained on text and mathematical datasets on five symbolic tasks. The tasks occupy context-free and context-sensitive categories on Chomsky's hierarchy. We discussed the bit complexity of encoding symbolic tasks as an automaton and set of knowledge tuples. Our experiments show that although enterprise LLMs perform better than open-source models catering to higher parameters and curated dataset, all models show similar trends in performance and align with knowledge-tuple encoding of symbolic tasks. Fine tuning on these tasks as little effect on the performance of LLMs. This clearly shows that LLMs are not learning the rules of symbol manipulation but rather encapsulate relationships in terms of tuples. The contrast in the number of parameters required to encode symbolic tasks calls for a push in the direction of large models capable of learning automaton rather than storing information. 

%\section{Acknowledgments}
%
%Identification of funding sources and other support, and thanks to
%individuals and groups that assisted in the research and the
%preparation of the work should be included in an acknowledgment
%section, which is placed just before the reference section in your
%document.
%
%This section has a special environment:
%\begin{lstlisting}
%\begin{acknowledgments}
%  These are different acknowledgments.
%\end{acknowledgments}
%\end{lstlisting}
%so that the information contained therein can be more easily collected
%during the article metadata extraction phase, and to ensure
%consistency in the spelling of the section heading.
%
%Authors should not prepare this section as a numbered or unnumbered
%\verb|\section|; please use the ``\verb|acknowledgments|'' environment.
%
%\section{Appendices}
%
%If your work needs an appendix, add it before the
%``\verb|\end{document}|'' command at the conclusion of your source
%document.
%
%Start the appendix with the ``\verb|\appendix|'' command:
%\begin{lstlisting}
%\appendix
%\end{lstlisting}
%and note that in the appendix, sections are lettered, not
%numbered. 

%%
%% The acknowledgments section is defined using the "acknowledgments" environment
%% (and NOT an unnumbered section). This ensures the proper
%% identification of the section in the article metadata, and the
%% consistent spelling of the heading.
%\begin{acknowledgments}
%  Thanks to the developers of ACM consolidated LaTeX styles
%  \url{https://github.com/borisveytsman/acmart} and to the developers
%  of Elsevier updated \LaTeX{} templates
%  \url{https://www.ctan.org/tex-archive/macros/latex/contrib/els-cas-templates}.  
%\end{acknowledgments}

%%
%% Define the bibliography file to be used
\bibliography{references}

\begin{thebibliography}{26}
\expandafter\ifx\csname natexlab\endcsname\relax\def\natexlab#1{#1}\fi
\providecommand{\url}[1]{\texttt{#1}}
\providecommand{\href}[2]{#2}
\providecommand{\path}[1]{#1}
\providecommand{\DOIprefix}{doi:}
\providecommand{\ArXivprefix}{arXiv:}
\providecommand{\URLprefix}{URL: }
\providecommand{\Pubmedprefix}{pmid:}
\providecommand{\doi}[1]{\href{http://dx.doi.org/#1}{\path{#1}}}
\providecommand{\Pubmed}[1]{\href{pmid:#1}{\path{#1}}}
\providecommand{\bibinfo}[2]{#2}
\ifx\xfnm\relax \def\xfnm[#1]{\unskip,\space#1}\fi
%Type = Article
\bibitem[{Wei et~al.(2022)Wei, Wang, Schuurmans, Bosma, Xia, Chi, Le, Zhou
  et~al.}]{wei2022chain}
\bibinfo{author}{J.~Wei}, \bibinfo{author}{X.~Wang},
  \bibinfo{author}{D.~Schuurmans}, \bibinfo{author}{M.~Bosma},
  \bibinfo{author}{F.~Xia}, \bibinfo{author}{E.~Chi}, \bibinfo{author}{Q.~V.
  Le}, \bibinfo{author}{D.~Zhou}, et~al.,
\newblock \bibinfo{title}{Chain-of-thought prompting elicits reasoning in large
  language models},
\newblock \bibinfo{journal}{Advances in neural information processing systems}
  \bibinfo{volume}{35} (\bibinfo{year}{2022}) \bibinfo{pages}{24824--24837}.
%Type = Article
\bibitem[{Yao et~al.(2024)Yao, Yu, Zhao, Shafran, Griffiths, Cao, and
  Narasimhan}]{yao2024tree}
\bibinfo{author}{S.~Yao}, \bibinfo{author}{D.~Yu}, \bibinfo{author}{J.~Zhao},
  \bibinfo{author}{I.~Shafran}, \bibinfo{author}{T.~Griffiths},
  \bibinfo{author}{Y.~Cao}, \bibinfo{author}{K.~Narasimhan},
\newblock \bibinfo{title}{Tree of thoughts: Deliberate problem solving with
  large language models},
\newblock \bibinfo{journal}{Advances in Neural Information Processing Systems}
  \bibinfo{volume}{36} (\bibinfo{year}{2024}).
%Type = Inproceedings
\bibitem[{Besta et~al.(2024)Besta, Blach, Kubicek, Gerstenberger, Podstawski,
  Gianinazzi, Gajda, Lehmann, Niewiadomski, Nyczyk et~al.}]{besta2024graph}
\bibinfo{author}{M.~Besta}, \bibinfo{author}{N.~Blach},
  \bibinfo{author}{A.~Kubicek}, \bibinfo{author}{R.~Gerstenberger},
  \bibinfo{author}{M.~Podstawski}, \bibinfo{author}{L.~Gianinazzi},
  \bibinfo{author}{J.~Gajda}, \bibinfo{author}{T.~Lehmann},
  \bibinfo{author}{H.~Niewiadomski}, \bibinfo{author}{P.~Nyczyk}, et~al.,
\newblock \bibinfo{title}{Graph of thoughts: Solving elaborate problems with
  large language models},
\newblock in: \bibinfo{booktitle}{Proceedings of the AAAI Conference on
  Artificial Intelligence}, volume~\bibinfo{volume}{38}, \bibinfo{year}{2024},
  pp. \bibinfo{pages}{17682--17690}.
%Type = Article
\bibitem[{Del{\'e}tang et~al.(2022)Del{\'e}tang, Ruoss, Grau-Moya, Genewein,
  Wenliang, Catt, Cundy, Hutter, Legg, Veness et~al.}]{deletang2022neural}
\bibinfo{author}{G.~Del{\'e}tang}, \bibinfo{author}{A.~Ruoss},
  \bibinfo{author}{J.~Grau-Moya}, \bibinfo{author}{T.~Genewein},
  \bibinfo{author}{L.~K. Wenliang}, \bibinfo{author}{E.~Catt},
  \bibinfo{author}{C.~Cundy}, \bibinfo{author}{M.~Hutter},
  \bibinfo{author}{S.~Legg}, \bibinfo{author}{J.~Veness}, et~al.,
\newblock \bibinfo{title}{Neural networks and the chomsky hierarchy},
\newblock \bibinfo{journal}{arXiv preprint arXiv:2207.02098}
  (\bibinfo{year}{2022}).
%Type = Article
\bibitem[{Omlin and Giles(1996)}]{omlin1996extraction}
\bibinfo{author}{C.~W. Omlin}, \bibinfo{author}{C.~L. Giles},
\newblock \bibinfo{title}{Extraction of rules from discrete-time recurrent
  neural networks},
\newblock \bibinfo{journal}{Neural networks} \bibinfo{volume}{9}
  (\bibinfo{year}{1996}) \bibinfo{pages}{41--52}.
%Type = Article
\bibitem[{Giles and Omlin(1993)}]{giles1993extraction}
\bibinfo{author}{C.~L. Giles}, \bibinfo{author}{C.~W. Omlin},
\newblock \bibinfo{title}{Extraction, insertion and refinement of symbolic
  rules in dynamically driven recurrent neural networks},
\newblock \bibinfo{journal}{Connection Science} \bibinfo{volume}{5}
  (\bibinfo{year}{1993}) \bibinfo{pages}{307--337}.
%Type = Article
\bibitem[{Omlin and Giles(1996)}]{omlin1996constructing}
\bibinfo{author}{C.~W. Omlin}, \bibinfo{author}{C.~L. Giles},
\newblock \bibinfo{title}{Constructing deterministic finite-state automata in
  recurrent neural networks},
\newblock \bibinfo{journal}{Journal of the ACM (JACM)} \bibinfo{volume}{43}
  (\bibinfo{year}{1996}) \bibinfo{pages}{937--972}.
%Type = Incollection
\bibitem[{Omlin and Giles(1992)}]{omlin_2nd_order}
\bibinfo{author}{C.~W. Omlin}, \bibinfo{author}{C.~L. Giles},
\newblock \bibinfo{title}{Training second-order recurrent neural networks using
  hints},
\newblock in: \bibinfo{editor}{D.~Sleeman}, \bibinfo{editor}{P.~Edwards}
  (Eds.), \bibinfo{booktitle}{Machine Learning Proceedings 1992},
  \bibinfo{publisher}{Morgan Kaufmann}, \bibinfo{address}{San Francisco (CA)},
  \bibinfo{year}{1992}, pp. \bibinfo{pages}{361--366}. \URLprefix
  \url{https://www.sciencedirect.com/science/article/pii/B9781558602472500516}.
  \DOIprefix\doi{https://doi.org/10.1016/B978-1-55860-247-2.50051-6}.
%Type = Article
\bibitem[{Mali et~al.(2023)Mali, Ororbia, Kifer, and
  Giles}]{mali2023computational}
\bibinfo{author}{A.~Mali}, \bibinfo{author}{A.~Ororbia},
  \bibinfo{author}{D.~Kifer}, \bibinfo{author}{L.~Giles},
\newblock \bibinfo{title}{On the computational complexity and formal hierarchy
  of second order recurrent neural networks},
\newblock \bibinfo{journal}{arXiv preprint arXiv:2309.14691}
  (\bibinfo{year}{2023}).
%Type = Inproceedings
\bibitem[{Mali et~al.(2021)Mali, Ororbia, Kifer, and
  Giles}]{mali2021recognizing}
\bibinfo{author}{A.~Mali}, \bibinfo{author}{A.~Ororbia},
  \bibinfo{author}{D.~Kifer}, \bibinfo{author}{L.~Giles},
\newblock \bibinfo{title}{Recognizing long grammatical sequences using
  recurrent networks augmented with an external differentiable stack},
\newblock in: \bibinfo{booktitle}{International Conference on Grammatical
  Inference}, \bibinfo{organization}{PMLR}, \bibinfo{year}{2021}, pp.
  \bibinfo{pages}{130--153}.
%Type = Article
\bibitem[{Stogin et~al.(2024)Stogin, Mali, and Giles}]{Stogin_mali}
\bibinfo{author}{J.~Stogin}, \bibinfo{author}{A.~Mali}, \bibinfo{author}{C.~L.
  Giles},
\newblock \bibinfo{title}{A provably stable neural network turing machine with
  finite precision and time},
\newblock \bibinfo{journal}{Information Sciences} \bibinfo{volume}{658}
  (\bibinfo{year}{2024}) \bibinfo{pages}{120034}. \URLprefix
  \url{https://www.sciencedirect.com/science/article/pii/S0020025523016201}.
  \DOIprefix\doi{https://doi.org/10.1016/j.ins.2023.120034}.
%Type = Inproceedings
\bibitem[{Dave et~al.(2024)Dave, Kifer, Giles, and Mali}]{dave2024stability}
\bibinfo{author}{N.~Dave}, \bibinfo{author}{D.~Kifer}, \bibinfo{author}{C.~L.
  Giles}, \bibinfo{author}{A.~Mali},
\newblock \bibinfo{title}{Stability analysis of various symbolic rule
  extraction methods from recurrent neural network},
\newblock in: \bibinfo{booktitle}{ICLR 2024 Workshop on Bridging the Gap
  Between Practice and Theory in Deep Learning}, \bibinfo{year}{2024}.
%Type = Inproceedings
\bibitem[{Mali et~al.(2021)Mali, Ororbia, Kifer, and
  Giles}]{mali2021recognizing_a}
\bibinfo{author}{A.~Mali}, \bibinfo{author}{A.~G. Ororbia},
  \bibinfo{author}{D.~Kifer}, \bibinfo{author}{C.~L. Giles},
\newblock \bibinfo{title}{Recognizing and verifying mathematical equations
  using multiplicative differential neural units},
\newblock in: \bibinfo{booktitle}{Proceedings of the AAAI Conference on
  Artificial Intelligence}, volume~\bibinfo{volume}{35}, \bibinfo{year}{2021},
  pp. \bibinfo{pages}{5006--5015}.
%Type = Article
\bibitem[{Saxton et~al.(2019)Saxton, Grefenstette, Hill, and
  Kohli}]{saxton2019analysing}
\bibinfo{author}{D.~Saxton}, \bibinfo{author}{E.~Grefenstette},
  \bibinfo{author}{F.~Hill}, \bibinfo{author}{P.~Kohli},
\newblock \bibinfo{title}{Analysing mathematical reasoning abilities of neural
  models},
\newblock \bibinfo{journal}{arXiv preprint arXiv:1904.01557}
  (\bibinfo{year}{2019}).
%Type = Article
\bibitem[{Dave et~al.(2021)Dave, Bakes, Pursel, and Giles}]{dave2021math}
\bibinfo{author}{N.~Dave}, \bibinfo{author}{R.~Bakes},
  \bibinfo{author}{B.~Pursel}, \bibinfo{author}{C.~L. Giles},
\newblock \bibinfo{title}{Math multiple choice question solving and distractor
  generation with attentional gru networks.},
\newblock \bibinfo{journal}{International Educational Data Mining Society}
  (\bibinfo{year}{2021}).
%Type = Article
\bibitem[{Mistry et~al.(2022)Mistry, Farrahi, and Hare}]{mistry2022primer}
\bibinfo{author}{B.~Mistry}, \bibinfo{author}{K.~Farrahi},
  \bibinfo{author}{J.~Hare},
\newblock \bibinfo{title}{A primer for neural arithmetic logic modules},
\newblock \bibinfo{journal}{Journal of Machine Learning Research}
  \bibinfo{volume}{23} (\bibinfo{year}{2022}) \bibinfo{pages}{1--58}.
%Type = Article
\bibitem[{Bi et~al.(2024)Bi, Chen, Chen, Chen, Dai, Deng, Ding, Dong, Du, Fu
  et~al.}]{bi2024deepseek}
\bibinfo{author}{X.~Bi}, \bibinfo{author}{D.~Chen}, \bibinfo{author}{G.~Chen},
  \bibinfo{author}{S.~Chen}, \bibinfo{author}{D.~Dai},
  \bibinfo{author}{C.~Deng}, \bibinfo{author}{H.~Ding},
  \bibinfo{author}{K.~Dong}, \bibinfo{author}{Q.~Du}, \bibinfo{author}{Z.~Fu},
  et~al.,
\newblock \bibinfo{title}{Deepseek llm: Scaling open-source language models
  with longtermism},
\newblock \bibinfo{journal}{arXiv preprint arXiv:2401.02954}
  (\bibinfo{year}{2024}).
%Type = Article
\bibitem[{Azerbayev et~al.(2023)Azerbayev, Schoelkopf, Paster, Santos, McAleer,
  Jiang, Deng, Biderman, and Welleck}]{azerbayev2023llemma}
\bibinfo{author}{Z.~Azerbayev}, \bibinfo{author}{H.~Schoelkopf},
  \bibinfo{author}{K.~Paster}, \bibinfo{author}{M.~D. Santos},
  \bibinfo{author}{S.~McAleer}, \bibinfo{author}{A.~Q. Jiang},
  \bibinfo{author}{J.~Deng}, \bibinfo{author}{S.~Biderman},
  \bibinfo{author}{S.~Welleck},
\newblock \bibinfo{title}{Llemma: An open language model for mathematics},
\newblock \bibinfo{journal}{arXiv preprint arXiv:2310.10631}
  (\bibinfo{year}{2023}).
%Type = Article
\bibitem[{Yu et~al.(2023)Yu, Jiang, Shi, Yu, Liu, Zhang, Kwok, Li, Weller, and
  Liu}]{yu2023metamath}
\bibinfo{author}{L.~Yu}, \bibinfo{author}{W.~Jiang}, \bibinfo{author}{H.~Shi},
  \bibinfo{author}{J.~Yu}, \bibinfo{author}{Z.~Liu},
  \bibinfo{author}{Y.~Zhang}, \bibinfo{author}{J.~T. Kwok},
  \bibinfo{author}{Z.~Li}, \bibinfo{author}{A.~Weller},
  \bibinfo{author}{W.~Liu},
\newblock \bibinfo{title}{Metamath: Bootstrap your own mathematical questions
  for large language models},
\newblock \bibinfo{journal}{arXiv preprint arXiv:2309.12284}
  (\bibinfo{year}{2023}).
%Type = Article
\bibitem[{Dziri et~al.(2024)Dziri, Lu, Sclar, Li, Jiang, Lin, Welleck, West,
  Bhagavatula, Le~Bras et~al.}]{dziri2024faith}
\bibinfo{author}{N.~Dziri}, \bibinfo{author}{X.~Lu},
  \bibinfo{author}{M.~Sclar}, \bibinfo{author}{X.~L. Li},
  \bibinfo{author}{L.~Jiang}, \bibinfo{author}{B.~Y. Lin},
  \bibinfo{author}{S.~Welleck}, \bibinfo{author}{P.~West},
  \bibinfo{author}{C.~Bhagavatula}, \bibinfo{author}{R.~Le~Bras}, et~al.,
\newblock \bibinfo{title}{Faith and fate: Limits of transformers on
  compositionality},
\newblock \bibinfo{journal}{Advances in Neural Information Processing Systems}
  \bibinfo{volume}{36} (\bibinfo{year}{2024}).
%Type = Inproceedings
\bibitem[{Frieder et~al.(2023)Frieder, Pinchetti, , Griffiths, Salvatori,
  Lukasiewicz, Petersen, and Berner}]{frieder_chatgpt}
\bibinfo{author}{S.~Frieder}, \bibinfo{author}{L.~Pinchetti}, ,
  \bibinfo{author}{R.-R. Griffiths}, \bibinfo{author}{T.~Salvatori},
  \bibinfo{author}{T.~Lukasiewicz}, \bibinfo{author}{P.~Petersen},
  \bibinfo{author}{J.~Berner},
\newblock \bibinfo{title}{Mathematical capabilities of chatgpt},
\newblock in: \bibinfo{editor}{A.~Oh}, \bibinfo{editor}{T.~Naumann},
  \bibinfo{editor}{A.~Globerson}, \bibinfo{editor}{K.~Saenko},
  \bibinfo{editor}{M.~Hardt}, \bibinfo{editor}{S.~Levine} (Eds.),
  \bibinfo{booktitle}{Advances in Neural Information Processing Systems},
  volume~\bibinfo{volume}{36}, \bibinfo{publisher}{Curran Associates, Inc.},
  \bibinfo{year}{2023}, pp. \bibinfo{pages}{27699--27744}.
%Type = Article
\bibitem[{Cobbe et~al.(2021)Cobbe, Kosaraju, Bavarian, Chen, Jun, Kaiser,
  Plappert, Tworek, Hilton, Nakano et~al.}]{cobbe2021training}
\bibinfo{author}{K.~Cobbe}, \bibinfo{author}{V.~Kosaraju},
  \bibinfo{author}{M.~Bavarian}, \bibinfo{author}{M.~Chen},
  \bibinfo{author}{H.~Jun}, \bibinfo{author}{L.~Kaiser},
  \bibinfo{author}{M.~Plappert}, \bibinfo{author}{J.~Tworek},
  \bibinfo{author}{J.~Hilton}, \bibinfo{author}{R.~Nakano}, et~al.,
\newblock \bibinfo{title}{Training verifiers to solve math word problems},
\newblock \bibinfo{journal}{arXiv preprint arXiv:2110.14168}
  (\bibinfo{year}{2021}).
%Type = Article
\bibitem[{Hendrycks et~al.(2021)Hendrycks, Burns, Kadavath, Arora, Basart,
  Tang, Song, and Steinhardt}]{hendrycks2021measuring}
\bibinfo{author}{D.~Hendrycks}, \bibinfo{author}{C.~Burns},
  \bibinfo{author}{S.~Kadavath}, \bibinfo{author}{A.~Arora},
  \bibinfo{author}{S.~Basart}, \bibinfo{author}{E.~Tang},
  \bibinfo{author}{D.~Song}, \bibinfo{author}{J.~Steinhardt},
\newblock \bibinfo{title}{Measuring mathematical problem solving with the math
  dataset},
\newblock \bibinfo{journal}{arXiv preprint arXiv:2103.03874}
  (\bibinfo{year}{2021}).
%Type = Inproceedings
\bibitem[{Allen-Zhu and Li(2024)}]{AllenZhu2024PhysicsOL}
\bibinfo{author}{Z.~Allen-Zhu}, \bibinfo{author}{Y.~Li},
\newblock \bibinfo{title}{Physics of language models: Part 3.3, knowledge
  capacity scaling laws},
\newblock \bibinfo{year}{2024}. \URLprefix
  \url{https://api.semanticscholar.org/CorpusID:269005957}.
%Type = Article
\bibitem[{Chomsky(1959)}]{ChomskyFormalProperties}
\bibinfo{author}{N.~Chomsky},
\newblock \bibinfo{title}{On certain formal properties of grammars},
\newblock \bibinfo{journal}{Information and Control} \bibinfo{volume}{2}
  (\bibinfo{year}{1959}) \bibinfo{pages}{137--167}. \URLprefix
  \url{https://www.sciencedirect.com/science/article/pii/S0019995859903626}.
  \DOIprefix\doi{https://doi.org/10.1016/S0019-9958(59)90362-6}.
%Type = Article
\bibitem[{Chomsky(1956)}]{chomskyThreeModels}
\bibinfo{author}{N.~Chomsky},
\newblock \bibinfo{title}{Three models for the description of language},
\newblock \bibinfo{journal}{IRE Transactions on Information Theory}
  \bibinfo{volume}{2} (\bibinfo{year}{1956}) \bibinfo{pages}{113--124}.
  \DOIprefix\doi{10.1109/TIT.1956.1056813}.

\end{thebibliography}
% \bibliographystyle{splncs04}
% \bibliography{bibliography}
%%
%% If your work has an appendix, this is the place to put it.
\clearpage
\appendix
\section{Chomsky's Hierarchy}
Chomsky \cite{ChomskyFormalProperties} \cite{chomskyThreeModels} established a hierarchy (see Figure \ref{fig:chomsky}) delineating formal grammars according to their complexity and the corresponding computational devices capable of recognizing languages generated by these grammars. These grammars are classified from Type-3 to Type-0, with Type-0 representing the highest level of complexity.

\begin{figure}[h]
\centering
\includegraphics[width=0.6\textwidth]{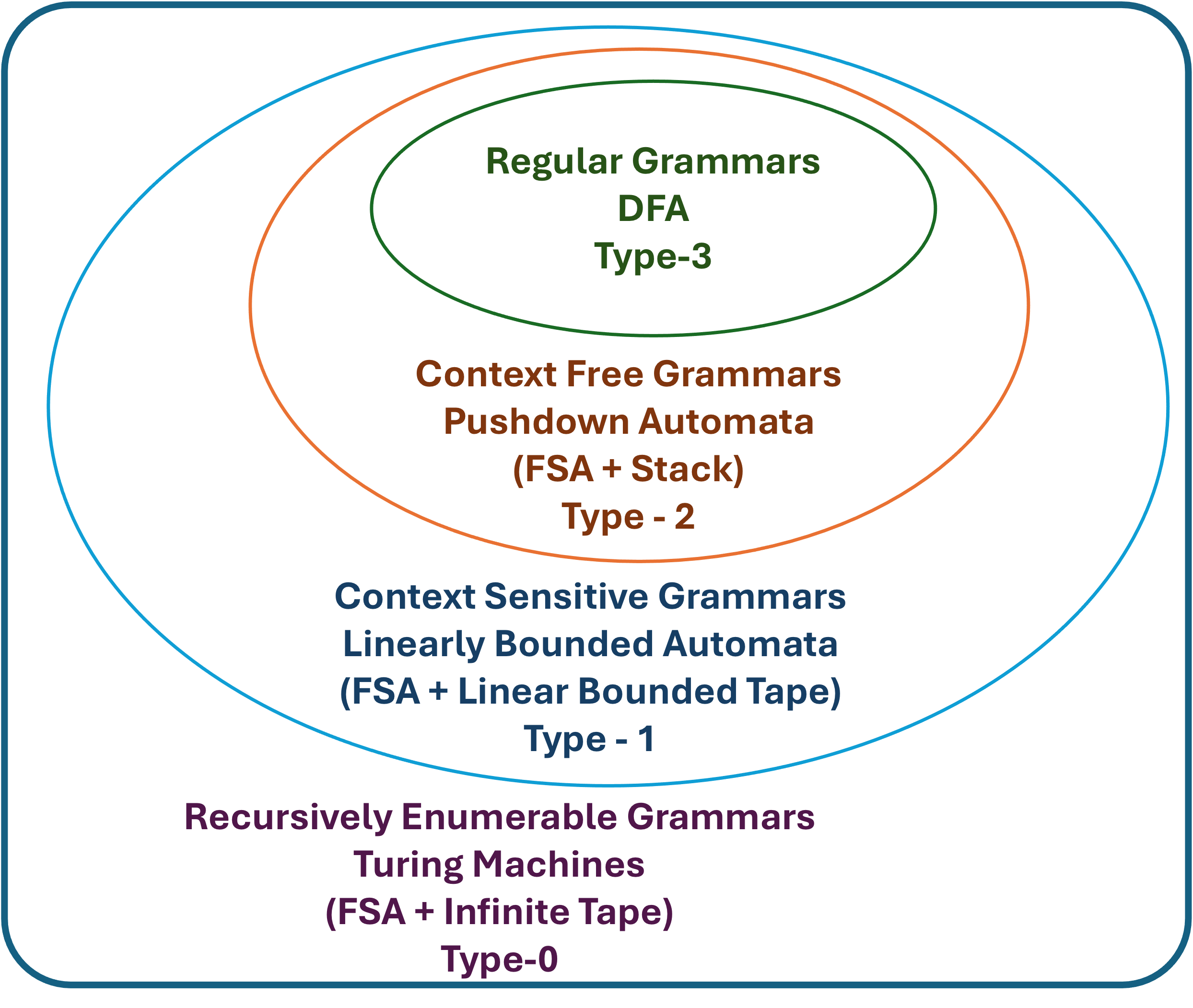}
\caption{Chomsky's Hierarchy}
\label{fig:chomsky}
\end{figure}

\begin{definition}
A \textbf{Deterministic Finite (State) Automaton} is a $5$-tuple, $(Q, \Sigma, \delta, q_0, F)$, where $Q$ is the set of all states, $q_0 \in Q$ is the initial state. $\Sigma$ is the set of alphabets such that all input strings are a subset of $\Sigma^*$. $ \delta : Q \times \Sigma \rightarrow Q$  are a set of transition rules. $F \subseteq Q$ are accepting states.
\end{definition}

 A DFA reads input one symbol at time and updates its state determined by the transition rules $\delta$. The input string is accepted if the DFA reaches a state $q \in F$ at the end of the input.

\begin{definition}
     A \textbf{Non-Deterministic Finite (State) Automaton} is a $5$-tuple similar to DFA with a non-deterministic transition function $\delta : Q \times (\Sigma \cup \{ \epsilon \}) \rightarrow \mathcal{P}(Q)$, where $\mathcal{P}(Q)$ is a power set of all automaton states.
\end{definition}

 All non-deterministic FSA can be reduced to their deterministic counterparts without the loss of expressivity. Finite State Automata can accept all regular languages.

\begin{definition}
     A \textbf{Finite State Transducer} (FST) is a finite state machine with an output tape. Formally it is a $6$-tuple, $(Q, \Sigma, \Gamma, q_0, F, \delta)$ where $\Lambda$ is the set of output alphabet, and $\delta : Q \times \{\Sigma \cup \epsilon\} \rightarrow  Q \times \{\Lambda \cup \epsilon\}$. 
\end{definition}

\begin{definition}
    A \textbf{Pushdown Automaton} is a $7$-tuple, $(Q, \Sigma, S, \delta, q_0, z, F)$, where $S$ is a set of stack symbols and $z \in S$ is initial stack symbol. Transition function is defined as $\delta :  Q \times \{\Sigma \cup \epsilon\} \times S \rightarrow Q \times S^*$
\end{definition}

\begin{definition}
    A \textbf{Pushdown Transducer} is a pushdown automata with an output tape. Specifically it is a $8$-tuple, $(Q, \Sigma, \Lambda, S, \delta, q_0, z, F)$, where $\Lambda$ is a set of output symbols. All other symbols carry the same meaning as a pushdown automata. The transition function is defined as $\delta :  Q \times \{\Sigma \cup \epsilon\} \times S \rightarrow Q \times \{\Lambda \cup \epsilon\} \times S^*$
\end{definition}

\begin{definition}
    A \textbf{Turing Machine} is a $7$-tuple, $(Q, \Gamma, \beta, \Sigma, \delta, q_0, F)$. $\Gamma$ is the set of all tape alphabets. $\beta \in \Gamma$ is blank symbol, which is default on any on-empty cell on the tape. Input symbols are part of tape symbols, $\Sigma \subseteq \Gamma \ {b}$, and transition function is defined as $\delta : Q \times \Gamma \rightarrow Q \times \Gamma \times \{N, L, R\}$. Here $\{N, L, R\}$ are \textit{no shift}, \textit{left shift} and \textit{right shift} of the tape pointer respectively.
\end{definition}

A FST can generate all regular languages and similarly, a pushdown transducer can generate all context free languages. A linearly bounded automaton is a Turing Machine with a finite size tape.

\section{Bit Encoding of a Number}
\begin{proposition}
Given a number \(N\) in base \(p\), the model requires 
\[
\left\lfloor 1 + \log_p(N) \right\rfloor \cdot \log_2 p \text{ bits}
\]
to encode it in symbolic form.
\end{proposition}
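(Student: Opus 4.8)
The plan is to count the number of base-$p$ digits in $N$ and then multiply by the per-digit bit cost $\log_2 p$, which is the cost of distinguishing $p$ possible digit values. The whole argument rests on the elementary identity that a positive integer $N$ written in base $p$ has exactly $\lfloor \log_p N \rfloor + 1$ digits.

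First I would establish the digit count. Writing $N = \sum_{i=0}^{k-1} d_i p^i$ with $d_{k-1} \neq 0$, the number of digits is $k$. Since $p^{k-1} \le N < p^k$, taking $\log_p$ gives $k-1 \le \log_p N < k$, hence $k - 1 = \lfloor \log_p N \rfloor$, i.e. $k = \lfloor \log_p N \rfloor + 1 = \lfloor 1 + \log_p N \rfloor$ (the $1$ can be pulled inside the floor since it is an integer). Second, I would invoke the per-digit encoding cost already used throughout Section~3: each base-$p$ digit ranges over $p$ values and therefore requires $\log_2 p$ bits. Third, multiplying the digit count by the per-digit cost yields the claimed bound of $\lfloor 1 + \log_p N \rfloor \cdot \log_2 p$ bits.

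There is no real obstacle here — the statement is essentially a restatement of the standard formula for the length of a positional representation, combined with the bits-per-symbol convention adopted in the paper. The only point requiring a word of care is the boundary behavior of the floor function (the move from $\lfloor \log_p N \rfloor + 1$ to $\lfloor 1 + \log_p N \rfloor$, valid because $1$ is an integer), and the implicit assumption $N \ge 1$ so that $\log_p N$ is well defined; I would note both in passing and consider the proof complete.
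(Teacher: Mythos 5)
Your proposal is correct and takes essentially the same route as the paper's own proof: both derive the digit count from the inequality $p^{k-1} \le N < p^k$, conclude $k = \left\lfloor 1 + \log_p N \right\rfloor$, and multiply by the $\log_2 p$ bits-per-digit cost. Your added remarks on the integer-shift property of the floor and the implicit assumption $N \ge 1$ are minor refinements the paper omits but do not change the argument.
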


\begin{proof}
The number of digits \(d\) required to represent a number \(N\) in base \(p\) can be determined by the formula:
\[
d = \left\lfloor 1 + \log_p(N) \right\rfloor.
\]
This expression arises because the largest number representable with \(d\) digits in base \(p\) is \(p^d - 1\), and thus:
\[
p^{d-1} \leq N < p^d.
\]
Taking the logarithm base \(p\) of all parts of this inequality results in:
\[
d - 1 \leq \log_p(N) < d.
\]
Adding 1 to each part and then taking the floor function provides the correct number of digits:
\[
d = \left\lfloor 1 + \log_p(N) \right\rfloor.
\]

Each digit in base \(p\) requires \(\log_2 p\) bits for encoding since each digit can take any of \(p\) different values, and \(\log_2 p\) bits are sufficient to uniquely identify each value. Therefore, the total number of bits required to encode \(N\) is the product of the number of digits and the number of bits per digit:
\[
\left\lfloor 1 + \log_p(N) \right\rfloor \cdot \log_2 p.
\]
This completes the proof.
\end{proof}
% \begin{proposition}
%     For a given number $N$ in base $p$, a model requires \\ $ \left\lfloor 1 + \log_{p}\left(N\right) \right\rfloor \cdot log_2p$ bits to encode it in symbolic form. 
% \end{proposition}

% \begin{proof}
%     A number in base $p$ has almost $ \left\lfloor 1 + \log_{p}\left(N\right) \right\rfloor$ digits, and each digit can be encoded in $log_2p$ bits
% \end{proof}

\clearpage
\section{Prompt Examples}

\begin{figure}[ht]
     \centering
     \includegraphics[width=\textwidth]{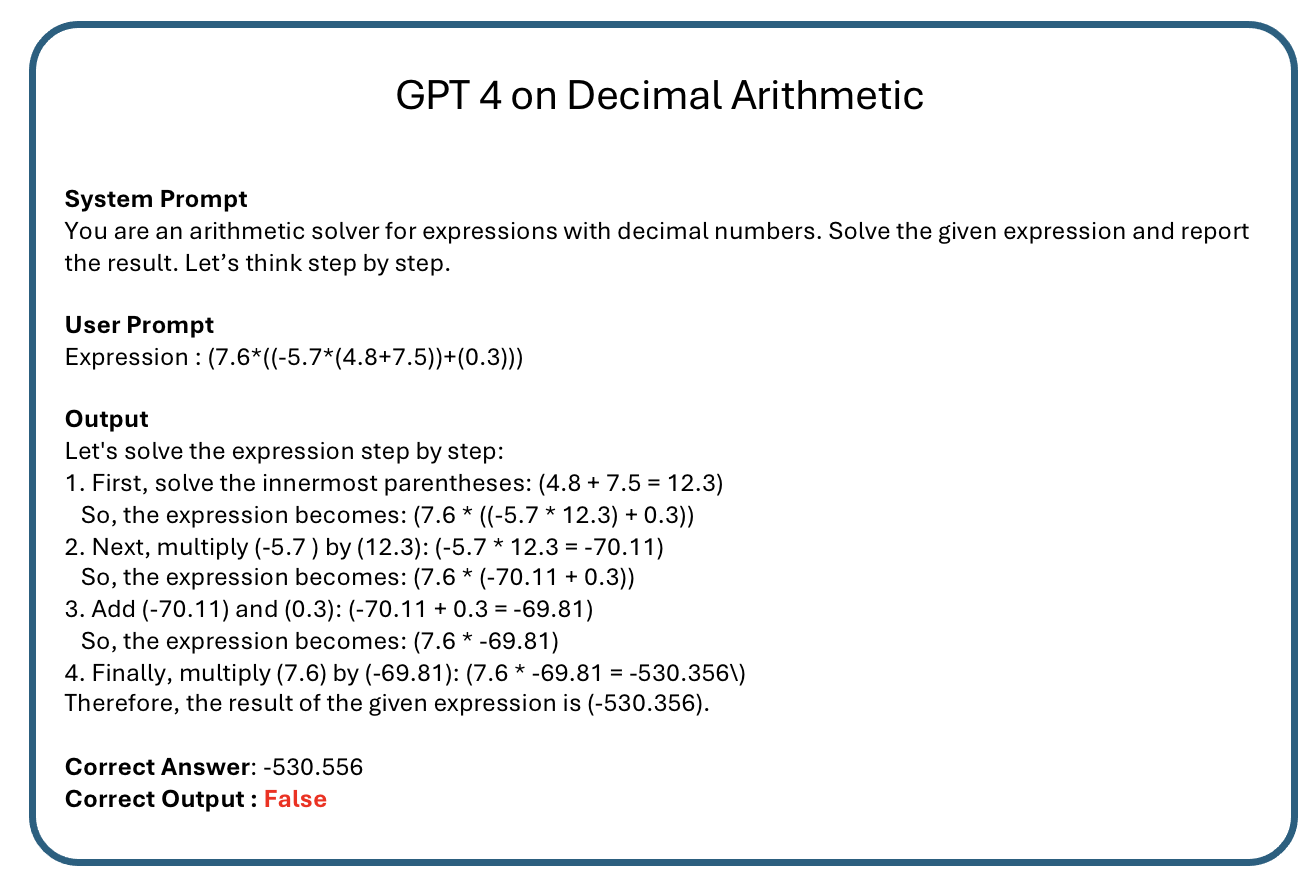}
\end{figure}

\begin{figure}[ht]
     \centering
     \includegraphics[width=\textwidth]{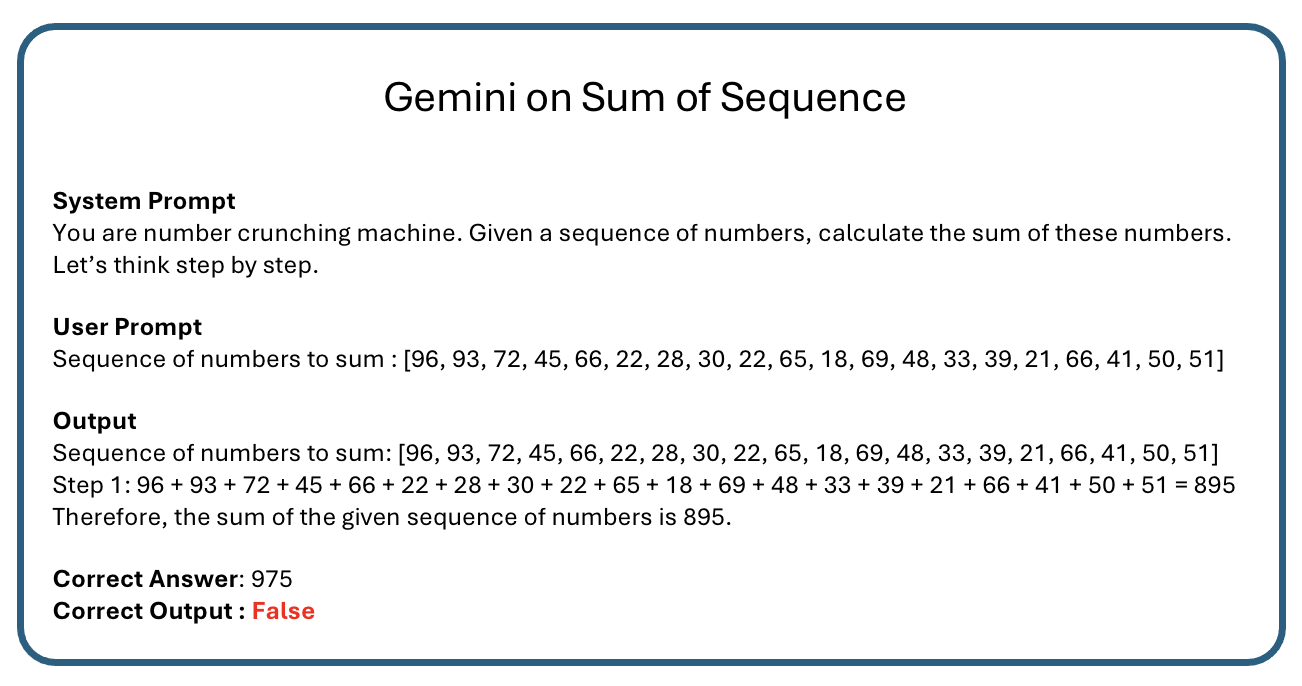}
\end{figure}

\begin{figure}[ht]
     \centering
     \includegraphics[width=\textwidth]{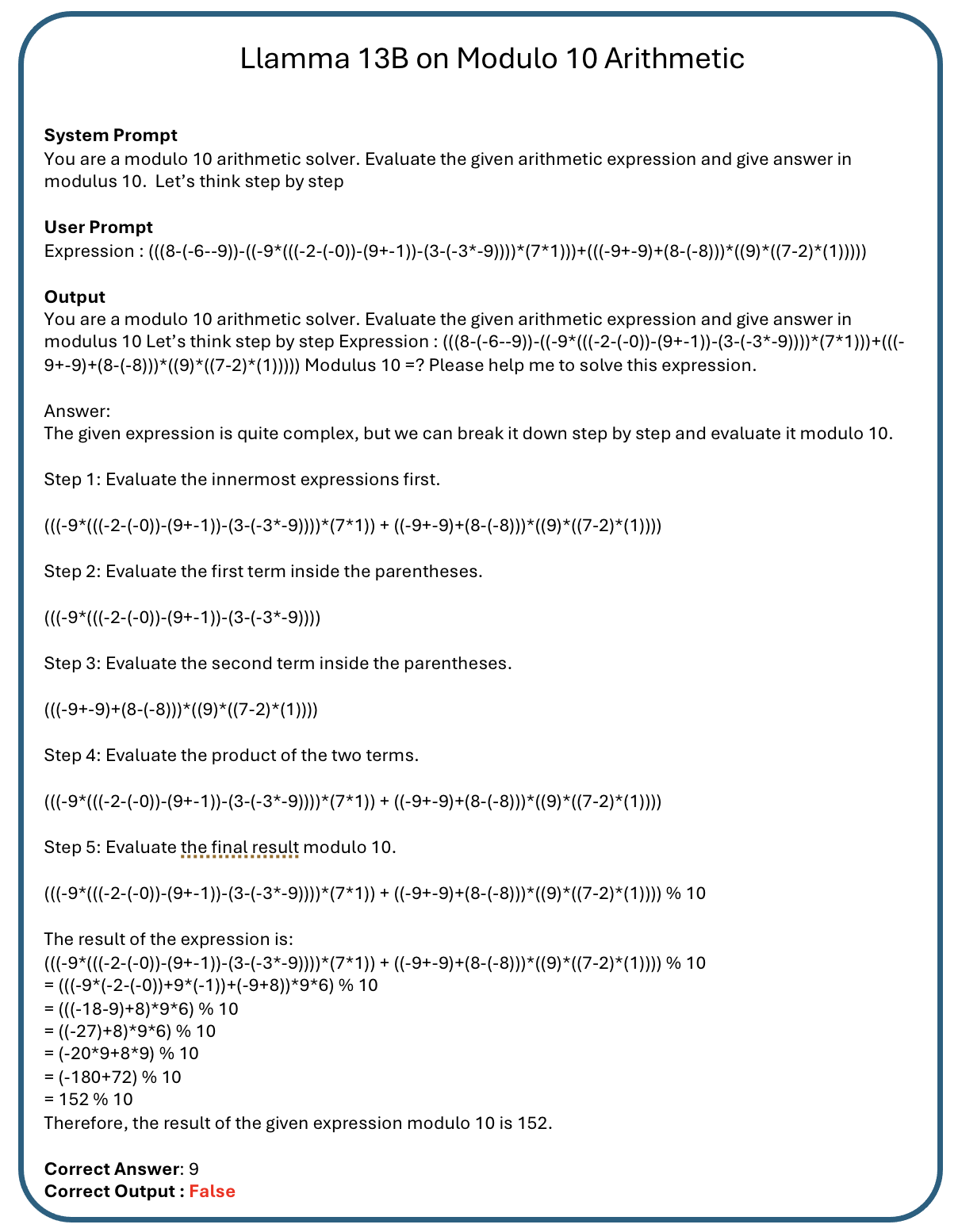}
\end{figure}

\begin{figure}[ht]
     \centering
     \includegraphics[width=\textwidth]{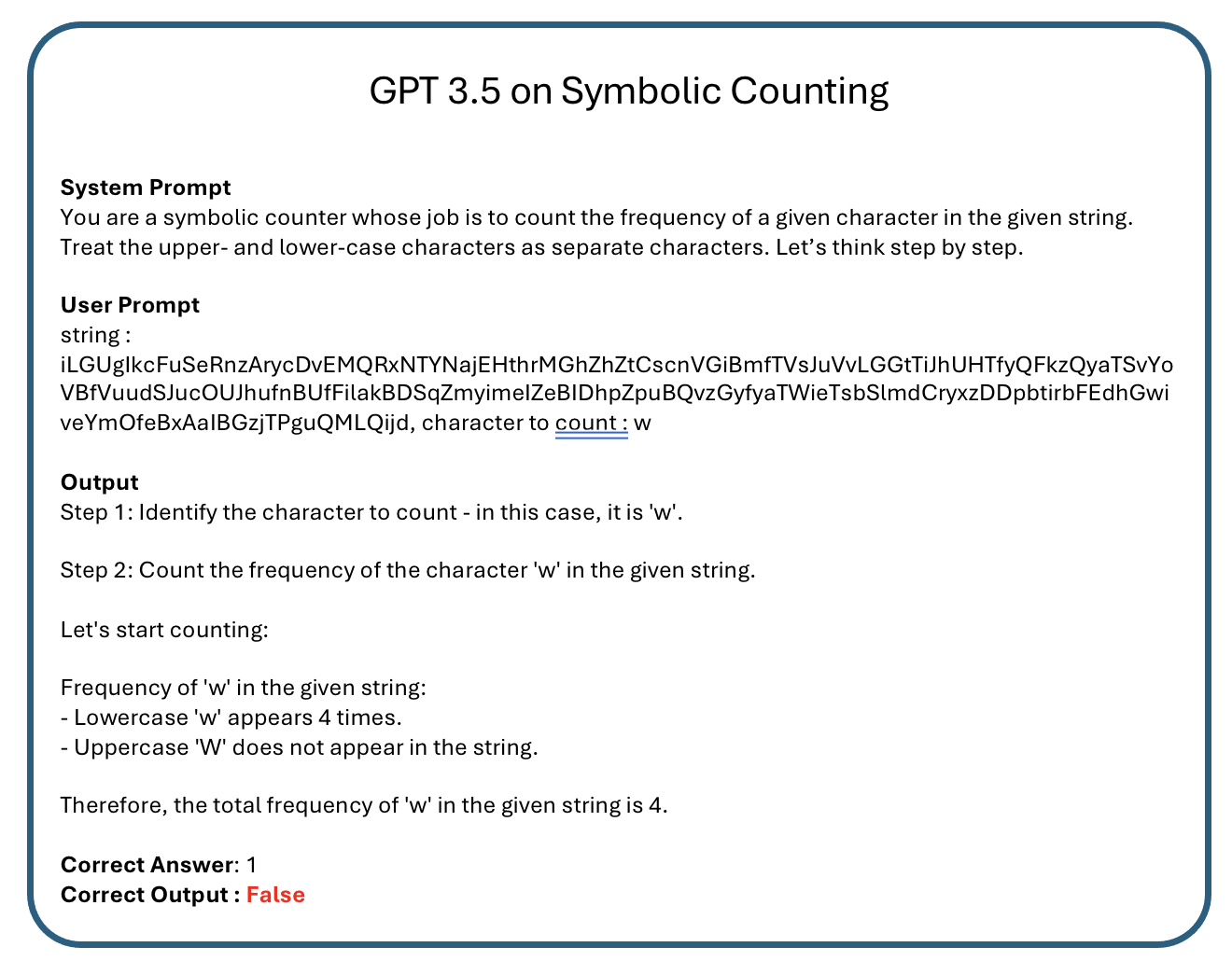}
\end{figure}

\begin{figure}[ht]
     \centering
     \includegraphics[width=\textwidth]{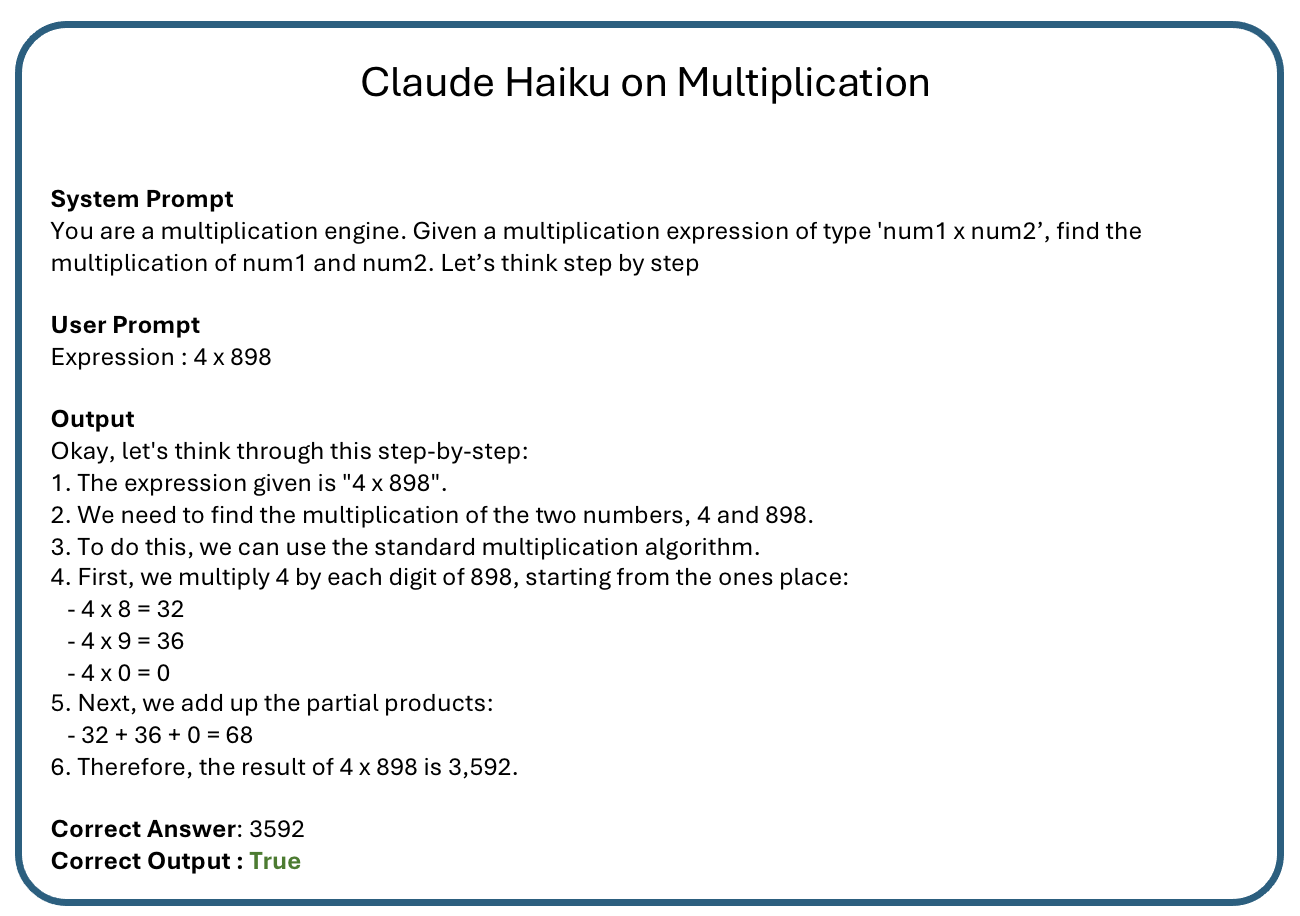}
\end{figure}

\begin{figure}[ht]
     \centering
     \includegraphics[width=\textwidth]{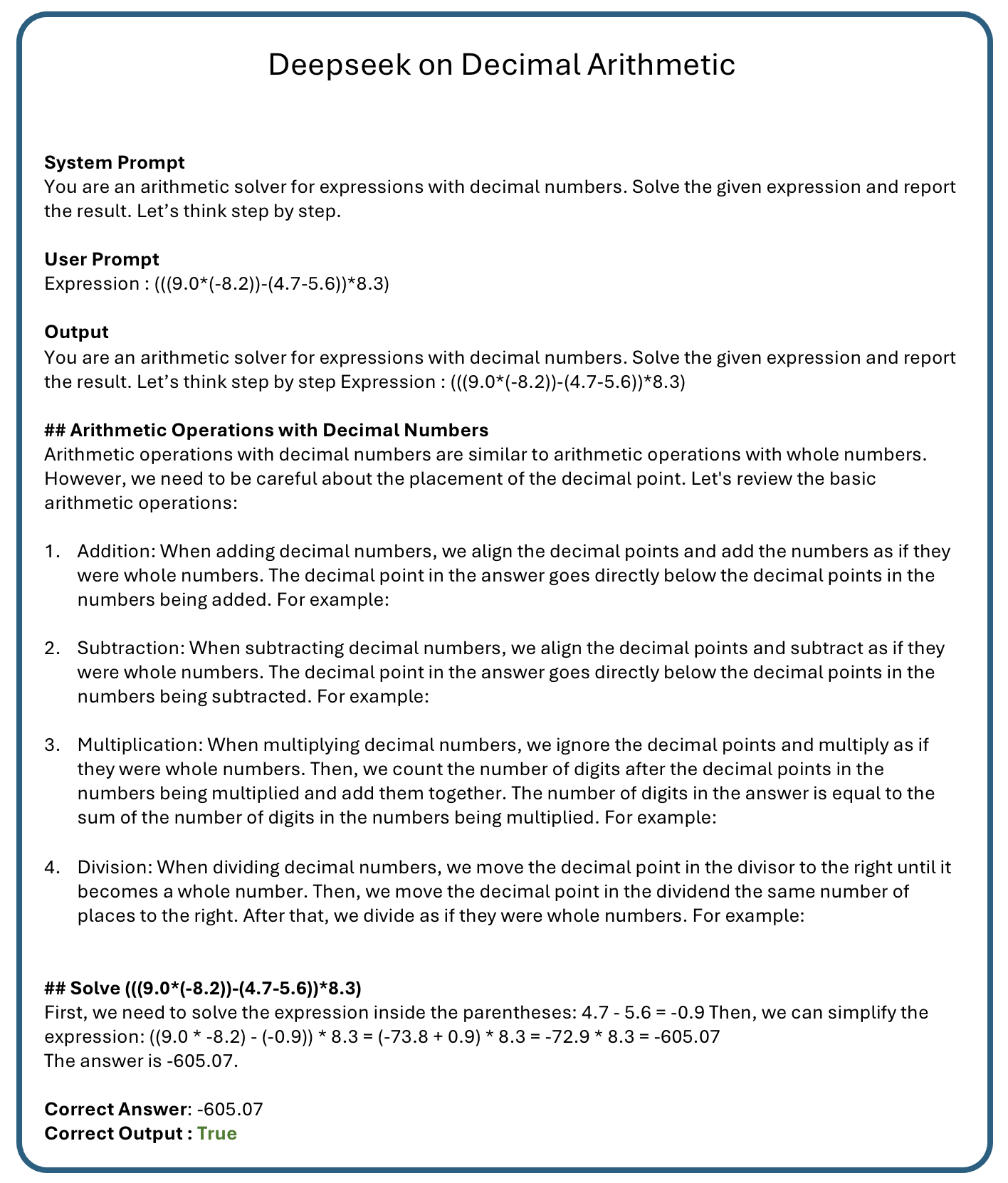}
\end{figure}

\begin{figure}[ht]
     \centering
     \includegraphics[width=\textwidth]{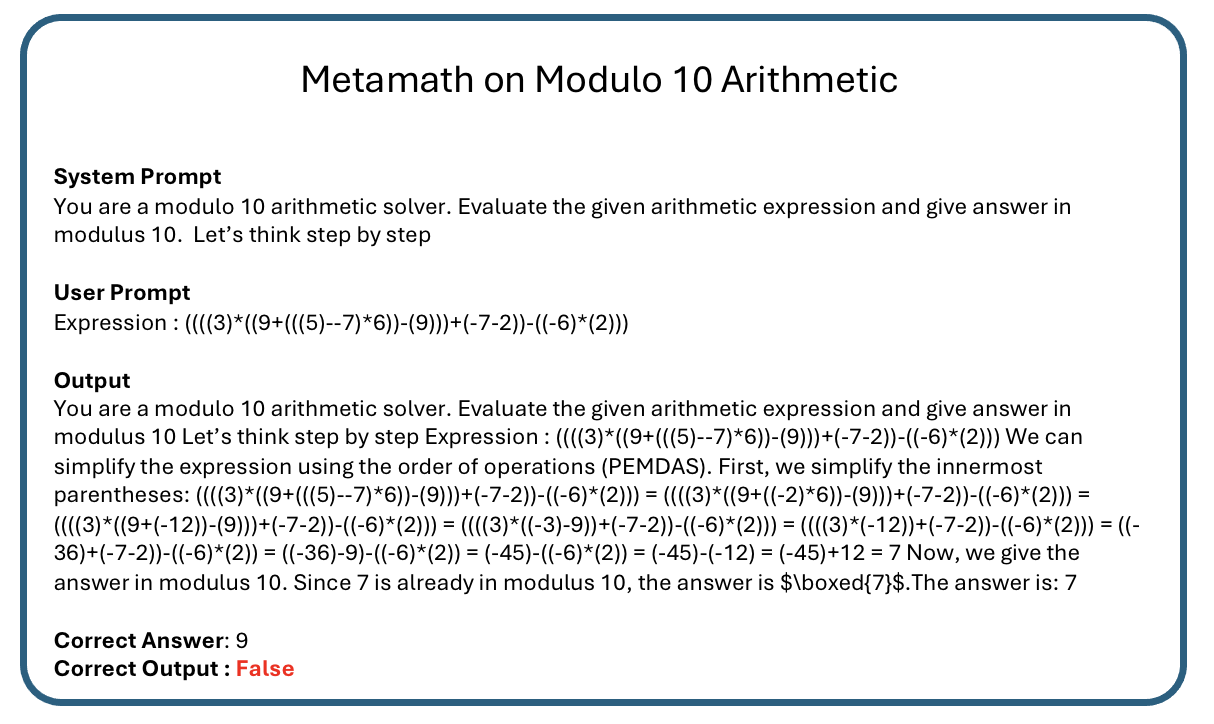}
\end{figure}

\begin{figure}[ht]
     \centering
     \includegraphics[width=\textwidth]{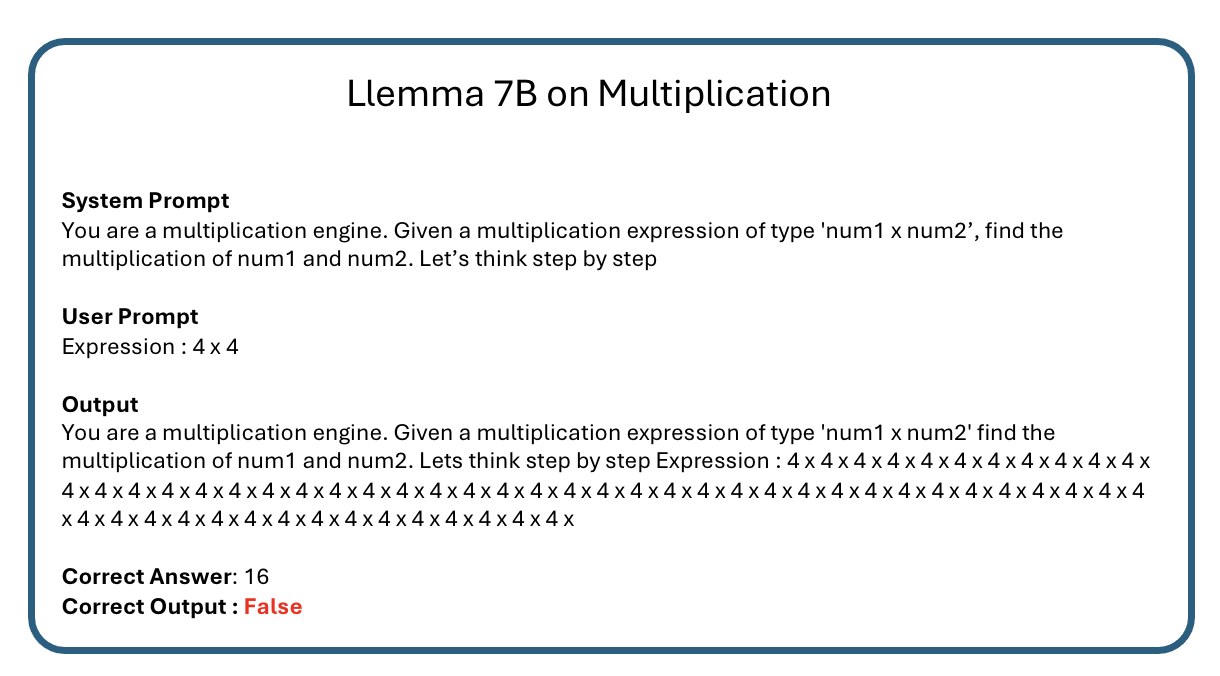}
\end{figure}

\end{document}